\title{The noise level in linear regression  with dependent data}
\author[1]{Ingvar Ziemann}
\author[2]{Stephen Tu}
\author[1]{George J. Pappas}
\author[1]{Nikolai Matni}
\affil[1]{University of Pennsylvania}
\affil[2]{Google Research}
\numberwithin{equation}{section}
\date{}
\newcommand{\e}{\varepsilon}
\newcommand{\E}{\mathbf{E}}
\newcommand{\sgn}{\textnormal{sgn}}
\newcommand{\argmax}{\textnormal{argmax}}
\DeclareMathOperator*{\argmin}{argmin}
\DeclareMathOperator{\tr}{tr}
\DeclareMathOperator{\VEC}{\mathsf{vec}}
\newcommand{\T}{\mathsf{T}}
\newcommand{\sfP}{\mathsf{P}}
\newcommand{\sfQ}{\mathsf{Q}}
\newcommand{\dx}{d_{\mathsf{X}}}
\newcommand{\dy}{d_{\mathsf{Y}}}
\newcommand{\R}{\mathbb{R}}
\newcommand{\N}{\mathbb{N}}
\renewcommand{\Pr}{\mathbf{P}}
\newcommand{\iid}{iid}
\newcommand{\opnorm}[1]{\| #1 \|_{\mathsf{op}}}
\newtheorem{definition}{Definition}[section] 
\newtheorem{theorem}{Theorem}[section] 
\newtheorem*{theorem*}{Theorem} 
\newtheorem{proposition}{Proposition}[section] 
\newtheorem{corollary}{Corollary}[section] 
\newtheorem{lemma}{Lemma}[section]
\begin{document}

\maketitle

\begin{abstract}
We derive upper bounds for random design linear regression with dependent ($\beta$-mixing) data absent any realizability assumptions.  In contrast to the strictly realizable martingale noise regime, no sharp \emph{instance-optimal} non-asymptotics are available in the literature. Up to constant factors, our analysis correctly recovers the variance term predicted by the Central Limit Theorem---the noise level of the problem---and thus exhibits graceful degradation as we introduce misspecification. Past a burn-in, our result is sharp in the moderate deviations regime, and in particular does not inflate the leading order term by mixing time factors. 
\end{abstract}


\section{Introduction}
\label{sec:intro}

Ordinary least squares (OLS) regression from a finite sample is one of the most ubiquitous and widely used technique in machine learning. When faced with independent data, there are now sharp tools available to analyze its success optimally under relatively general assumptions. 
Indeed, a non-asymptotic theory matching the classical asymptotically optimal understanding from statistics \citep{van2000asymptotic} has been developed over the last decade \citep{hsu2012random, oliveira2016lower, mourtada2022exact}. However, once we relax the independence assumption and move toward
data that exhibits correlations, the situation is much less well-understood---even for a problem as seemingly simple as linear regression. While sharp asymptotics are available through various limit theorems, there are no general results matching these in the finite sample regime.


In this paper, we study the instance-specific performance of ordinary least squares in a setting with dependent data---and in contrast to much contemporary work on the theme---without imposing realizability.\footnote{A distribution $\sfP_{X,Y}$ is 
(linearly) realizable if the regression function $x\mapsto \E[Y \mid X=x]$ is linear.} If in addition to a realizability assumption the noise forms a martingale difference sequence, it is now well-known  that martingale methods can be used to demonstrate that dependent linear regression is no harder than its independent counterpart \citep{simchowitz2018learning}. Furthermore, as long as one maintains such an assumption
on the noise, a similar observation even holds true for generalized linear and bilinear models \citep{kowshik2021near,sattar2022finite}, and 
regression with square loss more generally \citep{ziemann2022learning}. 

However, barring any such strong realizability assumption, martingale methods are no longer directly available, and neither are there any sharp non-asymptotics in the learning theory literature. Absent martingale techniques, a natural approach is to use the blocking technique \citep{bernstein1927extension,yu1994mixing} to port 
concentration inequalities valid for independent data to the dependent setting. 
However, 
since blocking effectively reduces the sample size by a factor of the degree of dependence of the data,
a judicious application is necessary in order to recover the correct noise level of the problem---the level predicted by the Central Limit Theorem (CLT). 


\subsection{Contributions}




This paper serves to explain how the combination of two simple yet powerful observations sidestep the aforementioned issues with blocking. To better appreciate these observations, we recall that the analysis of random design linear regression decomposes into: (1) controlling the \emph{lower tail} of the empirical covariance matrix; and (2) controlling the interaction between the noise and the covariates.


First, as noted by \cite{mendelson2014learning}, the dominant contribution to the error rate  is due to the interaction of the noise with the covariates via the hypothesis class. In linear regression this interaction term takes the form of a random walk (see \eqref{eq:noiseinteractionrw} and \eqref{eq:errordecomp} below). While one must also analyze the lower tail of the empirical covariance matrix, its contribution to the final error tends to be lower order. This is exactly the point: the empirical covariance matrix tends to dominate its population counterpart under very mild assumptions \citep[see e.g.][]{koltchinskii2015bounding, oliveira2016lower, simchowitz2018learning}. Hence deflating the sample size for this purpose by using dependency is of relatively minor consequence and only amounts to an additional burn-in.

Second, turning to the random walk---the noise-class interaction term---the above issue with blocking can be remedied if one restricts its use to control only the largest scale of deviation. This observation can be traced to the moderate deviations literature, but does not seem to have made its way into the learning theory literature \citep[see e.g.][]{merlevede2011bernstein}. To explain this idea, let us recall Bernstein's inequality: for $b > 0$,
$\delta \in (0, 1)$, and a sequence of $n\in \N$ \iid\ mean zero $b$-bounded scalar random variables $V_{1:n}$,
\begin{equation}\label{eq:bernsteininequality}
    \Pr \left( \frac{1}{n}\sum_{i=1}^n V_i \geq  2\sqrt{\frac{\E V_1^2 \ln (1/\delta)} {n}}+\frac{4b \log(1/\delta)}{3n} \right)\leq \delta.
\end{equation}
In the moderate deviations bandwidth ($\delta \gtrsim \exp(-n\E V_1^2/b^2)$), the leading term of \eqref{eq:bernsteininequality}  is exactly of the expected order, seen from a central limit heuristic: $ \sqrt{\frac{\E V_1^2 \ln (1/\delta)} {n}}$. Assume now for sake of argument that $k\in \N$ divides $n$ and set $m=n/k$. Applying \eqref{eq:bernsteininequality} instead to the $bk$-bounded variables $\bar V_{i:m}, \bar V_i \triangleq  \sum_{j=ik-k-1}^{ik} V_j$ we find instead:
\begin{equation}\label{eq:bernsteininequalityblocked}
    \Pr \left( \frac{1}{n}\sum_{i=1}^n V_i \geq 2\sqrt{\frac{ k^{-1} \E  (\bar V_1)^2 \ln (1/\delta)} {n}}+\frac{4bk \log(1/\delta)}{3n} \right)\leq \delta.
\end{equation}
The (normalized) variance of \iid\ random variables tensorizes nicely ($k^{-1}\E (\bar V_1)^2 = \E V_1^2$), and so the only difference between \eqref{eq:bernsteininequality} and \eqref{eq:bernsteininequalityblocked} is that the large deviations term has been inflated by a factor $k$. More generally, however, \eqref{eq:bernsteininequalityblocked} remains valid as long as every $k$ samples are blockwise independent. The leading term of \eqref{eq:bernsteininequalityblocked} already captures the correct variance term in the blockwise independent, one-dimensional and bounded setting.

The above two paragraphs illustrate the core of our argument: by combining the above two observations we can entirely relegate any dependence on mixing to additive burn-in factors. In the sequel, we produce a more general version of this argument. To allow for arbitrary dimensions and handle unbounded processes, we first replace Bernstein's inequality with a corollary to Talagrand's inequality due to \cite{einmahl2008characterization}. To allow for $\beta$-mixing processes, we replace the blockwise independence assumption with the blocking strategy of \cite{yu1994mixing}.
By combining with control of the lower tail, which as noted above holds under mild assumptions, this leads to our main result \Cref{thm:themainthm}, captured informally below.

\paragraph{Informal version of \Cref{thm:themainthm}.}
\textit{
 Past a mild burn-in, polynomial in relevant problem quantities including the $\beta$-mixing coefficients of the data, and for a fixed failure probability $\delta \in (0,1)$, OLS with one-dimensional targets and $\dx$-dimensional covariates enjoys the following excess risk guarantee}:
\begin{equation}\label{eq:informalguarantee}
       \mathrm{Excess\: Risk}\:(\mathrm{OLS})\lesssim  \frac{ \sigma^2\left(\dx+ \log (1/\delta)\right) }{n}.
\end{equation}
\textit{Moreover, the term $\sigma^2$ in \eqref{eq:informalguarantee} accurately captures the noise level of the problem solely via the relevant second order statistics; it is not inflated by any mixing times.  }


The crux of this result is that past a burn-in, the OLS excess risk does not directly depend on mixing times, but only on the relevant second order statistics. Put differently, the effect of slow mixing has been relegated to a small \textit{additive} term with higher order dependence on $1/n$. This stands in stark contrast to the usual invocation of the blocking technique where the effect of mixing typically enters \emph{multiplicatively}, thereby degrading the rate of convergence uniformly across all sample-sizes and past any burn-in times \citep[see e.g.][]{steinwart2009fastlearningmixing, kuznetsov2017generalization, wong2020lasso,roy2021dependent}. 

\paragraph{Applicability.}
Before we proceed with the main development, we remark that the class of $\beta$-mixing is quite broad;
a few examples where \Cref{thm:themainthm} can be instantiated are as follows:
\begin{itemize}[leftmargin=*]
     \item all $\phi$-mixing processes are $\beta$-mixing \citep{doukhan2012mixing},
     \item stationary uniformly ergodic Markov chains are $\beta$-mixing,
    \item stationary Gaussian vector autoregressive moving average (ARMA) processes are $\beta$-mixing~\citep{mokkadem1988mixing},
    \item many other sub-classes of GARCH models, often studied in the economics and finance literature, are $\beta$-mixing~\citep{carrasco2002mixing}.
\end{itemize}
The list is far from exhaustive and further examples can for instance be found in  \cite{doukhan2012mixing}. The stationarity assumptions above can also typically be dropped. We also point out that it is precisely because we can handle misspecification that our result is of interest for many of these examples.

\paragraph{Outline.}
The rest of this article is structured as follows. \Cref{sec:prels} fixes our notation and yields a more formal problem formulation. We provide our main result, \Cref{thm:themainthm}, in \Cref{sec:results}. After stating our main theorem, we highlight its features and then proceed to compare it to related work in \Cref{sec:related}.  We outline the proof of \Cref{thm:themainthm} and provide supporting results in \Cref{sec:technical}, including separate analyses of the noise-interaction and the lower tail of the empirical covariance matrix. \Cref{sec:summary} concludes and technical details are  relegated  to \Cref{sec:proofs}.

\section{Preliminaries}\label{sec:prels}

\paragraph{Notation.}
Expectation (resp.\ probability) with respect to all the randomness of the underlying probability space is denoted by $\E$ (resp.\ $\Pr$). For two probability measures $\sfP$ and $\sfQ$ defined on the same probability space, their total variation is denoted $\|\sfP-\sfQ\|_{\mathsf{TV}}$. Maxima (resp.\ minima) of two numbers $a,b\in \R$ are denoted by $a\vee b =\max(a,b)$ (resp.\ $a\wedge b = \min(a,b)$). For an integer $n\in \N$, we also define the shorthand $[n] \triangleq\{1,\dots,n\}$.

The Euclidean norm on $\mathbb{R}^{d}$ is denoted $\|\cdot\|_2$,
and the unit sphere in $\R^d$ is denoted $\mathbb{S}^{d-1}$. The standard inner product on $\R^{d}$ is denoted $\langle\cdot,\cdot\rangle$. We embed matrices $M \in \R^{d_1\times d_2}$ in Euclidean space by vectorization: $\VEC M \in \mathbb{R}^{d_1 d_2}$, where $\VEC$ is the operator that vertically stacks the columns of $M$ (from left to right and from top to bottom). For a matrix $M$, the Euclidean norm is the Frobenius norm, i.e., $\|M\|_F\triangleq \|\VEC M\|_2$. We similarly define the inner product of two matrices $M,N$ by $\langle M, N \rangle \triangleq \langle \VEC M, \VEC N\rangle $. The transpose of a matrix $M$ is denoted by $M^\T$---and if $M$ is square---$\tr M $ denotes its trace. We also write $\opnorm{M}$ for the induced $(\R^d,\|\cdot\|_2) \to (\R^d,\|\cdot\|_2)$ norm. For two symmetric matrices $M, N$, we write $M \succ N$ ($M\succeq N)$ if $M-N$ is positive (semi-)definite. If $M$ is positive semidefinite we write $\partial M \triangleq \{ x\in \R^d \mid x^\T M x =1 \}$ for the boundary of the ellipsoid induced by $M$ (note that $\mathbb{S}^{d-1}=\partial I_d$).

\paragraph{Problem Formulation.}\label{sec:problemformulation}

We are given $n$ input-output tuples: $X_{1:n} \sim \sfP^X_{1:n}$ (taking values in $\R^{\dx}$) and $Y_{1:n}\sim \sfP^Y_{1:n}$ (taking values in  $\R^{\dy}$). Using these samples, the goal of the learner is to estimate the best linear hypothesis:
\begin{align} \label{eq:Mstardef}
    M_\star \in \argmin_{M \in \R^{\dy \times \dx } } \Big\{\E\|Y-MX\|_2^2 \Big\}
\end{align}
where the distributions of $X$ and $Y$ in \eqref{eq:Mstardef} are specified via:
\begin{align}\label{eq:xydef}
    X\sim \frac{1}{n} \sum_{i=1}^n \sfP^X_{i} \qquad \textnormal{ and } \qquad Y \sim \frac{1}{n} \sum_{i=1}^n \sfP^Y_{i}.
\end{align}
Note that \eqref{eq:xydef} is equivalent to sampling from the uniform mixture over  $(X_{1:n},Y_{1:n})$ with the index $i \in [n]$ sampled uniformly. The operator  $\Sigma_X \triangleq \E [X X^\T]$ is the averaged covariance operator (with $X$ as in \eqref{eq:xydef}).  The excess risk of a linear hypothesis $M$ can then be written as:
\begin{equation}\label{eq:er}
  \mathrm{Excess\: Risk}\:(M)    \triangleq \E\|Y-MX\|_2^2 -\E\|Y-M_\star X\|_2^2 =\|(M-M_\star)\sqrt{\Sigma_X}\|_F^2.
\end{equation}

We now define the \emph{noise variable} $W_i \triangleq Y_i - M_\star X_i$ but, as mentioned above, do not impose any (conditional) mean zero assumptions on the noise. To simplify the exposition, we will henceforth assume that $\Sigma_X\succ 0$, but our results easily extend to the case $\Sigma_X \succeq 0$ by restricting attention to the span of $\Sigma_X$. With these preliminaries in place, on the event that the design is nondegenerate, the OLS and its error equation can be specified as follows:
\begin{align}\label{eq:OLSerror}
    \widehat M  \triangleq \left(\sum_{i=1}^n Y_iX_i^\T \right)\left(\sum_{i=1}^n X_i X_i ^\T \right)^{-1} \quad \Longrightarrow \quad  \widehat M - M_\star =\left(\sum_{i=1}^n W_iX_i^\T \right)\left(\sum_{i=1}^nX_i X_i ^\T \right)^{-1}.
\end{align}
Our task in the sequel is to establish that the choice $\widehat M$ renders the excess risk \eqref{eq:er} small. We note in passing that $\widehat M$ is an empirical risk minimizer:
\begin{equation*}
    \widehat M \in \argmin_{M\in \R^{\dy\times \dx}}\left\{\frac{1}{n}\sum_{i=1}^n\|Y_i-MX_i\|_2^2 \right\}.
\end{equation*}

\paragraph{The Noise Term.}

Let us also define the following prefiltered noise-class interaction variables:
\begin{align}\label{eq:whiteningstep}
    V_i \triangleq W_i X_i^\T \Sigma_{X}^{-1/2} \qquad i\in [n].
\end{align}
The square of the following (weighted and possibly biased) random walk effectively characterizes the noise level in our problem: 
\begin{align}\label{eq:noiseinteractionrw}
    S_n \triangleq \frac{1}{n}\sum_{i=1}^n V_i.
\end{align}
We remark that by construction $\E S_n =0$ using the optimality of $M_\star$ in \eqref{eq:Mstardef}.  To see this, simply invoke the optimality equation for $M_\star$ and note that $\R^{\dx\times \dy}$ induces a convex class in the corresponding $L^2$-space over the mixtures \eqref{eq:xydef}. Note however that the increments of \eqref{eq:noiseinteractionrw} are not necessarily mean zero unless $X_{1:n}$ and $Y_{1:n}$ are stationary. However, since $\E S_n =0$, we also have with $\bar{V_i} \triangleq V_i -\E V_i$:
\begin{equation}\label{eq:noiseinteractionrwcentered}
     S_n = \frac{1}{n}\sum_{i=1}^n \bar{V_i}.
\end{equation}
In light of \eqref{eq:OLSerror} and \eqref{eq:noiseinteractionrw}, we have that the empirical excess risk depends on the norm of:
\begin{equation}\label{eq:errordecomp}
    (\widehat M - M_\star)\sqrt{\Sigma_X} =  S_n \left( \frac{1}{n}\sum_{i=1}^n \Sigma_X^{-1/2}X_i X_i ^\T \Sigma_X^{-1/2} \right)^{-1}.
\end{equation}
Hence, we need to control the random walk in \eqref{eq:noiseinteractionrwcentered} and the lower tail of 
\begin{equation} \label{eq:empcovdef}
     \tilde \Sigma_n \triangleq \frac{1}{n} \sum_{i=1}^n \Sigma_X^{-1/2}X_i X_i ^\T \Sigma_X^{-1/2},
\end{equation}
the prefiltered empirical covariance matrix. As mentioned previously, lower uniform laws for \eqref{eq:empcovdef} are valid under mild assumptions \citep{koltchinskii2015bounding, oliveira2016lower}, and blocking such results does not incur more than a worsening of the burn-in. Hence, the noise level of the problem is very much dictated by the random walk \eqref{eq:noiseinteractionrw}.

\paragraph{$\beta$-mixing and the Blocking Technique.}
%

In the sequel we demonstrate that the standard blocking device combined with a (functional) version of Bernstein's inequality allows us to pass the distributional (or coarse) measure of dependency to a higher order additive term, yielding non-asymptotic  rates consistent with the CLT as described in \Cref{sec:intro}. We will also use blocking to derive our lower uniform law, controlling the lower tail of \eqref{eq:empcovdef}. To make these ideas rigorous we require the following standard measure of dependence (take $Z_{1:n}=(X,Y)_{1:n}$ below).

\begin{definition}
Let $Z_{1:n}$ be a stochastic process. The $\beta$-mixing coefficients of $Z$, $\beta_Z(i)$, are:
\begin{equation}\label{eq:betacoeffs}
    \beta_{Z}(i) = \sup_{t\in[n]: t+i\leq n} \E \| \sfP_{Z_{i+t}}( \cdot \mid Z_{1:t} ) - \sfP_{Z_{i+t}} \|_{\mathsf{TV}}, \quad i \in [n].
\end{equation}
\end{definition}

Intuitively, the coefficients $\beta_Z(i)$ in \eqref{eq:betacoeffs} measure the dependence at range $i$ of the process $Z_{1:n}$. This measurement is done in total variation distance by comparing the distribution of $Z_{t+i}$ with the conditional distribution $Z_{t+i} \mid Z_{1:t}$.
More concretely, the notion of $\beta$-mixing allows us to use the blocking technique~\cite{yu1994mixing}. This technique splits the process $Z_{1:n}$ into blocks, such that every other block is approximately independent, leaving us with two separate processes, consisting of odd and even blocks, that are almost independent (see \eqref{eq:blockstructure} below). 
One then proceeds to use $\beta$-mixing to construct a ``parallel'' probability space, approximating the original one, but in which the odd and even blocks \emph{are independent}. The price we pay for this is measured in terms of the coefficients \eqref{eq:betacoeffs}. The essence of this idea---to use that data points sufficiently separated in time are often roughly independent---can be traced back to \citet{bernstein1927extension}.

\section{Main Result} \label{sec:results}

To give our main result for general target dimension we require one last preliminary notion. Given a $d$-dimensional square, symmetric positive semidefinite matrix $M\in \R^{d\times d}$, we say that its \emph{effective} dimension is $\mathsf{edim}(M) \triangleq \tr M /\opnorm{M} $. Our main result is the following theorem. 

\begin{theorem}\label{thm:themainthm}
Fix $\delta\in (0,1)$ and $n,m\in \N$ with $2m\leq n$. Let $a_{1:2m}$ be a monotone partition of $[n]$ such that if $k \in a_i$, $l \in a_j$, and $i>j$, then $k>l$ holds. Set $a_{\max}\in \argmax_{i\in [2m]} |a_i| $. Fix also a $\beta$-mixing sequence $(X,Y)_{1:n}$ of which each element admits at least $s\in [4,\infty)$ moments.  
Assume that there exists a positive number $\mathsf{h}\in \R$ such that for every $v\in \partial \Sigma_X$ and $i\in [n]$ we have that $ \E \langle v, X_i\rangle^4   \leq \mathsf{h}^2 \langle v, \E [X_i X_i^\T] v\rangle $.  Define also the noise interaction terms:
\begin{equation}\label{eq:thevarianceterm}
    \Sigma  \triangleq \frac{1}{n}\sum_{i=1}^{2m} \Sigma_i, \quad \Sigma_i \triangleq \E \left[ \VEC \left(\sum_{j \in a_i} \bar{V_j} \right)  \VEC \left(\sum_{j \in a_i} \bar{V_j}\right)^\T\right], \quad  \sigma^2\triangleq 
 \left\| \Sigma \right\|_{\mathsf{op}}.
\end{equation}
There exist universal positive constants $c_1,c_2, c_3,c_4,c_5,c_6$ such that with probability at least $1-\delta$: 
\begin{equation}\label{eq:mainthm:maineq}
     \|(\widehat M - M_\star)\sqrt{\Sigma_X}\|^2_F \leq  \frac{c_1 \sigma^2\left(\mathsf{edim}(\Sigma)+\log (1/\delta)\right) }{n} 
\end{equation}
as long as the following burn-in conditions hold:
\begin{align}\label{eq:thm:burnin1}
     &\frac{n}{|a_{\max}|} \geq c_2 (\dx + \mathsf{h}^2 \log (1/\delta)); \quad 
     \left(\frac{n}{|a_{\max}|}\right)^{1-2/s}\geq c_3  s^2 \frac{ \left(\frac{1}{m}\sum_{i=1}^{2m}  \E \left\|\frac{1}{ \sqrt{|a_{\max}|}}\sum_{j\in a_i}  
 \bar{V_j}\right\|_F^s \right)^{2/s}}{\mathsf{edim}(\Sigma) \sigma^{2} \delta^{2/s} } 
     ;\\
     &  c_4^{-1}<\frac{\sum_{i\in [2m] \cap 2\N }|a_i|}{\sum_{i\in [2m] \cap (2\N-1)}|a_i|}<c_4; 
     %
     %
     \quad
      \sum_{i\in [2m] \cap (2\N-1) } c_5^{-1}\Sigma_i  \preceq 
      \sum_{i\in [2m] \cap 2\N } \Sigma_i 
      \preceq
      \sum_{i\in [2m] \cap (2\N-1) } c_5\Sigma_i;
      \label{eq:thm:burnin2}\\
     &
     \sum_{i=2}^{2m-1} \beta_{X,Y}(|a_i|) \leq c_6\delta;\label{eq:thm:burnin3}
\end{align}
and where $\beta_{X,Y}$ are the $\beta$-mixing coefficients of $(X,Y)_{1:n}$.
\end{theorem}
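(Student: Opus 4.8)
The plan is to start from the exact error identity \eqref{eq:errordecomp}: on the event that the design is nondegenerate, $(\widehat M - M_\star)\sqrt{\Sigma_X} = S_n\,\tilde\Sigma_n^{-1}$, so that $\|(\widehat M - M_\star)\sqrt{\Sigma_X}\|_F^2 \le \|S_n\|_F^2\,\opnorm{\tilde\Sigma_n^{-1}}^2$ by submultiplicativity of $\|\cdot\|_F$ against $\opnorm{\cdot}$. The two factors are then handled independently, mirroring the decomposition advertised in \Cref{sec:intro}. For the covariance factor I would invoke a lower uniform law for $\tilde\Sigma_n$ of the type in \citet{koltchinskii2015bounding,oliveira2016lower}, blocked via \citet{yu1994mixing}: under the hypercontractivity hypothesis $\E\langle v,X_i\rangle^4 \le \mathsf{h}^2\langle v,\E[X_iX_i^\T]v\rangle$, and at the cost of the first burn-in in \eqref{eq:thm:burnin1} plus a piece of the mixing budget \eqref{eq:thm:burnin3}, one gets $\tilde\Sigma_n \succeq \tfrac12 I_{\dx}$ on an event of probability $\ge 1-\delta/3$, hence $\opnorm{\tilde\Sigma_n^{-1}}\le 2$ there. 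This contributes only to the burn-in, as promised.

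The substance is the random walk $S_n = \tfrac1n\sum_{i=1}^n\bar{V_i}$. I would first apply Yu's blocking construction along the partition $a_{1:2m}$: passing to a coupled probability space on which the odd-indexed blocks are mutually independent and, separately, the even-indexed blocks are mutually independent, costs in total variation at most $\sum_{i=2}^{2m-1}\beta_{X,Y}(|a_i|)\le c_6\delta$ (this is exactly \eqref{eq:thm:burnin3}, the discarded blocks playing the role of the spacers between the retained ones). Setting $U_i \triangleq \sum_{j\in a_i}\bar{V_j}$, we have $n S_n = \sum_{i\ \mathrm{odd}} U_i + \sum_{i\ \mathrm{even}} U_i$, each summand mean-zero (the $\bar{V_j}$ are centered) and matrix-valued, hence Hilbert-space-valued under $\|\cdot\|_F$, unbounded but with $s$ moments. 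I would then apply the functional Talagrand/Bernstein inequality of \citet{einmahl2008characterization} to each of the two independent sums. It consumes three quantities: (i) the expected norm, which by orthogonality of the centered increments is controlled \emph{with no block-length loss} since $\|\cdot\|_F$ is Hilbertian, $\E\|\sum_{i\ \mathrm{odd}} U_i\|_F \le (\sum_{i\ \mathrm{odd}}\tr\Sigma_i)^{1/2}$; (ii) the weak variance $\sup_{\|v\|_F\le1}\sum_{i\ \mathrm{odd}}\langle\VEC v,\Sigma_i\VEC v\rangle = \opnorm{\sum_{i\ \mathrm{odd}}\Sigma_i}$; and (iii) a strong-moment term built from $\sum_{i\ \mathrm{odd}}\E\|U_i\|_F^s$, which is the quantity appearing in the second burn-in of \eqref{eq:thm:burnin1} after rescaling by $|a_{\max}|$.

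The definitions are arranged so these collapse onto the stated bound. Since $\sum_{i=1}^{2m}\Sigma_i = n\Sigma$ and $\tr\Sigma = \sigma^2\,\mathsf{edim}(\Sigma)$, the Loewner balance in \eqref{eq:thm:burnin2} forces $\tfrac{n}{1+c_5}\Sigma \preceq \sum_{i\ \mathrm{odd}}\Sigma_i \preceq \tfrac{c_5 n}{1+c_5}\Sigma$ and likewise for the even blocks, so (i) is $\lesssim\sqrt{n\sigma^2\,\mathsf{edim}(\Sigma)}$ and (ii) is $\lesssim n\sigma^2$. The Einmahl--Li bound then yields, with probability $\ge 1-\delta/3$ on the coupled space, $\|\sum_{i\ \mathrm{odd}} U_i\|_F^2 \lesssim n\sigma^2\,\mathsf{edim}(\Sigma) + n\sigma^2\log(1/\delta) + (\text{strong-moment remainder})$, and identically for the even sum; the remainder is dominated by $n\sigma^2\,\mathsf{edim}(\Sigma)$ precisely under the second burn-in of \eqref{eq:thm:burnin1} -- this is the moderate-deviations mechanism, in which blocking is used only at the largest deviation scale so the factor $|a_{\max}|$ lands in an additive burn-in rather than in the leading rate. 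Summing the two halves, dividing by $n^2$, multiplying back by $\opnorm{\tilde\Sigma_n^{-1}}^2\le 4$, transferring off the coupled space (another $c_6\delta$), and union-bounding with the covariance event produces \eqref{eq:mainthm:maineq} with total failure probability $\le\delta$ after collecting constants; the $c_4$ cardinality balance in \eqref{eq:thm:burnin2} keeps the two block sub-samples comparable throughout.

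I expect the main obstacle to be the quantitative bookkeeping in the Einmahl--Li step: one must track its constants and the exact shape of its strong-moment term finely enough that the subgaussian part reproduces the CLT variance $\sigma^2$ with no mixing-time inflation, while the remainder is genuinely lower order under \eqref{eq:thm:burnin1} -- including recovering the $s^2$, $\delta^{2/s}$, and $(n/|a_{\max}|)^{1-2/s}$ scalings exactly. A secondary difficulty is the blocking coupling: ensuring the odd and even sub-processes can be simultaneously decoupled at the advertised total-variation price, and that the block covariances $\Sigma_i$ interact with the global $\Sigma$ only through the mild balance conditions \eqref{eq:thm:burnin2} rather than through anything scaling with $m$ or $|a_{\max}|$.
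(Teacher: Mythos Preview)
Your proposal is correct and follows essentially the same route as the paper: the identity \eqref{eq:errordecomp} is split into the lower tail of $\tilde\Sigma_n$ (handled by a blocked lower uniform law under the hypercontractivity assumption, yielding only a burn-in) and the random walk $S_n$ (handled by splitting into odd and even blocks via the triangle inequality, decoupling each via \citet{yu1994mixing}, and applying the Einmahl--Li Fuk--Nagaev inequality to each independent sum). Your identification of the three inputs to Einmahl--Li---expected norm via $\tr\Sigma$, weak variance via $\opnorm{\Sigma}$, strong-moment remainder absorbed into the second burn-in of \eqref{eq:thm:burnin1}---and of the role of the balance conditions \eqref{eq:thm:burnin2} in making the odd and even halves comparable, matches the paper's argument exactly.
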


To interpret \Cref{thm:themainthm} we proceed with a sequence of remarks, discussing its features. These remarks also serve to parse the terminology above and lead us to a simplified statement for stationary data and $1$-dimensional targets. We present this simplified version as \Cref{cor:mainthmstat1dtarget} below.

\begin{itemize}[leftmargin=*]
    \item The dimensional scaling in \eqref{eq:mainthm:maineq} is captured by the effective dimension term $\mathsf{edim}(\Sigma)$, which always lies in the interval $[1,\dx\dy]$. For one-dimensional targets and benign noise interaction, and since the $X_i$ in the noise term $V_i$  have been whitened (see \eqref{eq:whiteningstep}),  we expect $\Sigma$ to be roughly isotropic. Indeed, whenever $\dy=1$, the trivial bound $\mathsf{edim}(\Sigma)\leq \dx$ produces the familiar behavior: $ \|(\widehat M - M_\star)\sqrt{\Sigma_X}\|^2_2 \lesssim \sigma^2 \dx /n$ with high probability.
    \item The scaling with $\dx,\dy,\sigma^2$ and $n$ thus scales as expected in the \iid\ regime, but also degrades gracefully with dependence. We reiterate that \eqref{eq:mainthm:maineq} does not depend directly on mixing in the leading order term. For comparison, under suitable regularity conditions, the noise term predicted by the CLT is:
    \begin{equation}
       \Sigma_{\mathrm{CLT}} \triangleq \limsup_{n\to\infty} n^{-1} \E \left[  \VEC \left(\sum_{j=1}^n \bar{V_j} \right) \VEC\left(\sum_{j =1}^n \bar{V_j}\right)^\T\right],  
    \end{equation}
    and one can achieve $\Sigma=\Sigma_{\mathrm{CLT}}+o(1)$ in most situations of practical interest by tuning the block-length; therefore, our analysis of OLS essentially matches the optimal asymptotics. 
    \item Moreover, the constant $c_1$  appearing in \eqref{eq:mainthm:maineq} is quite benign and can be made arbitrarily close to $2$ by suitably inflating our burn-in constants $c_2,c_3,c_4,c_5,c_6$. We have however not been able to approach the  optimal leading constant $1$ in front of $\mathsf{edim}(\Sigma)$. This can be traced to an application of the triangular inequality in \Cref{cor:decoupling}.
    \item The moment bound $ \E \langle v, X_i\rangle^4   \leq \mathsf{h}^2 \langle v, \E [X_i X_i^\T] v\rangle \  (v\in \partial \Sigma_X )$ is easily satisfied for e.g., Gaussian or bounded processes but is of course much milder than either assumption. The assumption that $s\geq 4$ can be relaxed to $s>2$ by replacing our result controlling the lower tail, \Cref{thm:loweriso}. We have chosen to present our result for $s\geq 4$ to strike a balance between expositional clarity and generality.
    \item The first burn-in condition of \eqref{eq:thm:burnin1} is standard for control of the lower tail---beside the deflation factor $|a_{\max}|$, it is necessary even for \iid\ data to guarantee that the "denominator" \eqref{eq:empcovdef} is nonsingular. The second condition of \eqref{eq:thm:burnin1}---in which a ratio of $s$:th and $2$nd moment of the noise variable appears---is the price we pay in the moderate deviations bandwidth for only having $s$ moments: it controls the rate at which the random walk \eqref{eq:noiseinteractionrw} approaches asymptotic normality and reduces to the condition of \cite{oliveira2016lower} in the \iid\ regime. This latter condition can in principle be removed with slightly modified constants if sufficiently many moments of the data-generating process satisfy a sub-Gaussian type moment equivalence condition (in which case the above-mentioned ratio is constant). Without such an assumption, we note that some polynomial dependence on $1/\delta$ is necessary under our tail assumptions and is not an artifact of our analysis; OLS is not deviation-optimal in the entire range of $\delta\in(0,1)$---due to the presence of the random walk \eqref{eq:noiseinteractionrw} in the numerator---unless the noise variables have Gaussian-like tails \citep[cf.][Section 6.4]{mendelson2018learning}.
    \item The conditions in \eqref{eq:thm:burnin2} and \eqref{eq:thm:burnin3} relate to dependence. The last condition \eqref{eq:thm:burnin3} simply asks that our process mixes sufficiently fast. If the mixing coefficients $\beta_{X,Y}(|a_i|) $ are exponential, this amounts to a logarithmic burn-in in $1/\delta$. However, we can still handle slow, polynomial mixing rates, at the cost of a polynomial burn-in in $1/\delta$. The conditions in \eqref{eq:thm:burnin2} asks that the odd and even blocks are balanced in terms of their length and second order statistics. It is trivially satisfied for (weakly) stationary processes analyzed using a uniform blocking length (length of the $a_i$).
\end{itemize}

In light of the above remarks, we are now in position to simplify \Cref{thm:themainthm}. If we impose stationarity, quite a few terms in the burn-in conditions \eqref{eq:thm:burnin1},\eqref{eq:thm:burnin2} and \eqref{eq:thm:burnin3} either simplify or vanish. Further restricting to the case where targets are $1$-dimensional and letting the sample-size be divisible by the block-length yields the corollary below. 

\begin{corollary}\label{cor:mainthmstat1dtarget}
    Fix $\delta\in (0,1)$ and $ n,\tau \in \N$ and let $2\tau$ divide $n$. Fix also a joint distribution of $1$-dimensional targets and $\dx$-dimensional covariates  $\sfP^{X,Y}$ with at least $s\in [4,\infty)$ moments. Let $(X,Y)_{1:n}$ be a stationary $\beta$-mixing sequence with marginals equal to $\sfP^{X,Y}$. Assume further that there exists a positive number $\mathsf{h}\in \R$ such that for every $v\in \partial \Sigma_X$  we have that $ \E \langle v, X\rangle^4   \leq \mathsf{h}^2 $ where $X\sim \sfP^X$.  Define also the noise interaction term: $
   \sigma^2 \triangleq  \frac{1}{\tau} \sup_{v\in \mathbb{S}^{\dx-1}}\E \left[  \left(\sum_{i=1}^\tau \langle \bar{V_i},v \right)^2 \right]$. There exist universal positive constants $c_1,c_2, c_3,c_4$ such that with probability at least $1-\delta$:
\begin{equation*}
     \|(\widehat M - M_\star)\sqrt{\Sigma_X}\|^2_2 \leq  \frac{c_1 \sigma^2\left(\dx+\log (1/\delta)\right) }{n} 
\end{equation*}
as long as the following burn-in conditions hold:
\begin{align}\label{eq:thm:burnincorr1}
     \frac{n}{\tau} \geq c_2 (\dx + \mathsf{h}^2 \log (1/\delta)); \quad 
     \left(\frac{n}{\tau}\right)^{1-2/s}\geq c_3  s^2 \frac{ \left( \E \left\|\frac{1}{ \sqrt{\tau \dx}}\sum_{i=1}^\tau  
 \bar{V_i}\right\|_2^s \right)^{2/s}}{ \sigma^{2} \delta^{2/s} } 
     ; \quad 
     \frac{n}{\tau} \beta_{X,Y}(\tau) \leq c_4\delta.
\end{align}
\end{corollary}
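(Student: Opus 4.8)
The plan is to derive \Cref{cor:mainthmstat1dtarget} as a direct specialization of \Cref{thm:themainthm}, checking that each hypothesis of the theorem is implied by the corollary's assumptions and that each conclusion and burn-in condition simplifies to the claimed form. First I would set up the partition: since $2\tau \mid n$, I take the uniform partition $a_{1:2m}$ of $[n]$ into $2m = n/\tau$ consecutive blocks each of size exactly $\tau$, so $|a_i| = \tau$ for all $i$ and hence $|a_{\max}| = \tau$ and $n/|a_{\max}| = n/\tau$. With $\dy = 1$, the vectorization $\VEC(\cdot)$ on $\R^{1\times\dx}$ is trivial and $\mathsf{edim}(\Sigma) \le \dx$ holds by the first remark following the theorem, so the leading term $c_1\sigma^2(\mathsf{edim}(\Sigma) + \log(1/\delta))/n$ is upper bounded by $c_1\sigma^2(\dx + \log(1/\delta))/n$, matching the corollary (absorbing the $\mathsf{edim} \le \dx$ slack into $c_1$, or just keeping the bound as stated since an upper bound on an upper bound is still valid).

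Next I would verify the moment hypothesis: by stationarity every $X_i \sim \sfP^X$, so $\E\langle v, X_i\rangle^4 \le \mathsf{h}^2$ for $v \in \partial\Sigma_X$; but for $v\in\partial\Sigma_X$ we have $\langle v, \Sigma_X v\rangle = 1 = \langle v, \E[X_iX_i^\T]v\rangle$, so the theorem's condition $\E\langle v,X_i\rangle^4 \le \mathsf{h}^2\langle v,\E[X_iX_i^\T]v\rangle$ is exactly the corollary's assumption. Then I would identify the variance term: by stationarity all blocks have the same law, so $\Sigma_i$ is the same matrix for each $i$, giving $\Sigma = \frac{1}{n}\sum_{i=1}^{2m}\Sigma_i = \frac{2m}{n}\Sigma_1 = \frac{1}{\tau}\Sigma_1$, and since $\dy=1$, $\sigma^2 = \opnorm{\Sigma} = \frac{1}{\tau}\opnorm{\Sigma_1} = \frac{1}{\tau}\sup_{v\in\mathbb{S}^{\dx-1}}\E[(\sum_{i=1}^\tau \langle \bar V_i, v\rangle)^2]$, which is precisely the corollary's definition of $\sigma^2$.

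Then I would dispatch the dependence-related burn-ins \eqref{eq:thm:burnin2} and \eqref{eq:thm:burnin3}. Because the partition is uniform, the odd-block and even-block total lengths are both $m\tau = n/2$, so their ratio is $1$ and \eqref{eq:thm:burnin2}'s length-balance condition holds with any $c_4 > 1$; likewise by stationarity $\sum_{i \text{ odd}}\Sigma_i = m\Sigma_1 = \sum_{i \text{ even}}\Sigma_i$, so the second part of \eqref{eq:thm:burnin2} holds with any $c_5 \ge 1$. For \eqref{eq:thm:burnin3}, monotonicity of $\beta$-mixing coefficients (they are nonincreasing, and here all $|a_i| = \tau$) gives $\sum_{i=2}^{2m-1}\beta_{X,Y}(|a_i|) \le 2m\,\beta_{X,Y}(\tau) = \frac{n}{\tau}\beta_{X,Y}(\tau)$, so the corollary's condition $\frac{n}{\tau}\beta_{X,Y}(\tau)\le c_4\delta$ implies \eqref{eq:thm:burnin3} after renaming constants. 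The first burn-in of \eqref{eq:thm:burnin1} becomes $n/\tau \ge c_2(\dx + \mathsf{h}^2\log(1/\delta))$ verbatim, and the second becomes, after substituting $|a_{\max}| = \tau$, $\mathsf{edim}(\Sigma)$ replaced by a lower bound (or simply absorbing it), and using stationarity to drop the average over $i$, exactly $\left(\frac{n}{\tau}\right)^{1-2/s} \ge c_3 s^2 (\E\|\frac{1}{\sqrt{\tau\dx}}\sum_{i=1}^\tau \bar V_i\|_2^s)^{2/s}/(\sigma^2\delta^{2/s})$ — here I would note that replacing $\mathsf{edim}(\Sigma)$ in the denominator of the theorem's condition by the smaller quantity $1$ only strengthens the requirement, but since we also want to match the corollary's normalization $\frac{1}{\sqrt{\tau\dx}}$ rather than $\frac{1}{\sqrt{\tau}}$, I would carefully track that the $\dx$ inside the norm and the $\mathsf{edim}(\Sigma) \le \dx$ outside cancel in the right way up to the universal constant $c_3$.

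The main obstacle I anticipate is purely bookkeeping: matching the exact normalization in the second burn-in condition. The theorem writes the moment ratio with $\frac{1}{\sqrt{|a_{\max}|}}\sum_{j\in a_i}\bar V_j$ and divides by $\mathsf{edim}(\Sigma)\sigma^2$, whereas the corollary writes $\frac{1}{\sqrt{\tau\dx}}\sum_{i=1}^\tau\bar V_i$ and divides by $\sigma^2$ alone; reconciling these requires using $\|\cdot\|_F = \|\cdot\|_2$ for $\dy=1$, pulling the $\dx^{-s/2}$ factor out of the $s$-norm as $\dx^{-1}$ after the $2/s$ power, and checking that $\dx^{-1} \le \mathsf{edim}(\Sigma)^{-1}$ fails in the wrong direction so that instead one should use $\mathsf{edim}(\Sigma) \ge 1$ to bound $\mathsf{edim}(\Sigma)^{-1} \le 1$ and then absorb the $\dx$ — I would need to be slightly careful here, and the cleanest route is probably to observe that since the corollary only claims existence of universal constants, any discrepancy of the form $\dx$ versus $\mathsf{edim}(\Sigma) \in [1,\dx]$ can be folded into $c_3$ provided it appears as a multiplicative factor bounded between absolute constants and the relevant problem quantities, which it does. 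Everything else is immediate from stationarity and the uniform block choice.
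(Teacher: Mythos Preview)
Your overall plan is exactly the paper's: specialize \Cref{thm:themainthm} with the uniform partition into $2m=n/\tau$ blocks of size $\tau$, use stationarity so that all $\Sigma_i$ coincide and the balance conditions \eqref{eq:thm:burnin2} hold trivially, and use $\dy=1$ together with $\mathsf{edim}(\Sigma)\le \dx$. The paper itself does not give more than the informal remarks preceding the corollary, so your treatment of the partition, the moment hypothesis, the variance identification, and the mixing burn-in is already more explicit than what appears there.

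There is, however, one genuine gap in your bookkeeping for the second burn-in of \eqref{eq:thm:burnin1}. You correctly observe that the corollary writes $\tfrac{1}{\sqrt{\tau \dx}}$ inside the norm and drops $\mathsf{edim}(\Sigma)$ from the denominator, which amounts to replacing $\mathsf{edim}(\Sigma)$ by $\dx$ on the right-hand side of the theorem's condition. But your proposed fix---``fold into $c_3$'' or ``use $\mathsf{edim}(\Sigma)\ge 1$ and absorb the $\dx$''---does not work: the ratio $\dx/\mathsf{edim}(\Sigma)$ can be as large as $\dx$, so the corollary's (weaker) burn-in does not imply the theorem's burn-in by any adjustment of universal constants. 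The correct resolution is to go one layer down into the proof. In the derivation of \Cref{thm:themainthm} from \Cref{thm:thenoiseterm}, the parameter $r$ is subject only to a lower bound, and the paper takes $r=c''\,\mathsf{edim}(\Sigma)$. Since $\dx\ge \mathsf{edim}(\Sigma)$, the choice $r=c''\dx$ is equally admissible; this simultaneously places $\dx$ in the leading term (via the $\sqrt{r}$ in \eqref{eq:noisetermeqbound}) and $\dx$ in the denominator of the polynomial-tail burn-in (via the $r^{-s/2}$ there). That single substitution yields both the main bound and the second burn-in of the corollary in one stroke, with no ratio to absorb.
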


\Cref{cor:mainthmstat1dtarget} takes a very similar form to---by now---standard results in the \iid\ regime \citep{hsu2012random, oliveira2016lower}. Indeed, if the data is stationary the price we pay for dependence is that:
\begin{itemize}
    \item variance and moment terms need to be computed in blocks;
    \item the burn-in is deflated by a factor of the block-length (the first two parts of \eqref{eq:thm:burnincorr1}); and
    \item we incur an additional burn-in penalizing slow mixing---the last part of \eqref{eq:thm:burnincorr1} asks that the block-length is not "too small". 
\end{itemize}

\subsection{Comparison to Related Work} \label{sec:related}

Having established our main result, \Cref{thm:themainthm}, we now provide a more detailed comparison to the relevant literature. Most closely related to our results is \cite{nagaraj2020least}, who study bounded linear regression models in which the data comes from an exponentially ergodic Markov chain. They find that strictly realizable linear regression is no harder than its \iid\ counterpart in this setting, and show that a parallelized gradient algorithm achieves the optimal rate. More interestingly, in the absence of realizability, they also establish a lower bound demonstrating that the worst-case (global minimax) excess risk across all Markov chains with a given mixing time is deflated by said mixing time, thereby establishing a gap between realizable and non-realizable learning from dependent data.  


Of course, their lower bound is no longer valid if one drops the requirement that the predictor performs uniformly well across all distributions with a prescribed mixing time. It is exactly herein that our analyses differ. While \cite{nagaraj2020least} characterize the worst-case (or global) complexity of linear regression, we focus on the instance-specific (or local) complexity. In other words, they compete against the worst distribution at a given level of mixing, whereas we compete against a fixed  distribution. To appreciate this distinction, let us momentarily assume that $\dx=\dy=1$. The noise term $\sigma^2$ in \Cref{thm:themainthm} can be upper-bounded as: 
\begin{equation}\label{eq:sigmacomparison}
 \sigma^2 = \frac{1}{n}\sum_{i=1}^{2m} \E \left[ \left(\sum_{j \in a_i} \bar V_j \right)^2\right]  \leq \max_{i\in [2m]}|a_i| \cdot \max_{i\in [n]} \E \bar V_i^2
\end{equation}
by the Cauchy-Schwarz inequality. The right hand side of \eqref{eq:sigmacomparison} is precisely inflated by the (maximal) block-length $\max_{i\in [2m]}|a_i|$.

Seen in this light, our results being sharper  in terms of the measure of dependency reduces to stating that our results are sharper by an application of the Cauchy-Schwarz Inequality. Moreover, the statement that the global complexity is worse than its \iid\ counterpart by a factor of the mixing time amounts in our setting to stating that there exists a distribution achieving equality in \eqref{eq:sigmacomparison}. We remark that such a distribution is easily constructed by taking $X_{1:n}$ and $Y_{1:n}$ to be constant within each block and stationary across the blocks; 
this is precisely when the application of the Cauchy-Schwarz inequality in \eqref{eq:sigmacomparison} turns to equality. To further appreciate the distinction between our results, note that our result measures dependence through correlation. By contrast, a result scaling with the mixing time measures dependence in a stronger variational sense. 
That is, the former measures dependence at the level of orthogonality of the random variables themselves, whereas the latter measures it at the level of orthogonality of all measurable functions of these random variables.


\paragraph{Further Related Work.}


Another closely related line of work studies parameter identification in auto-regressive models \citep[for a recent survey, see][]{tsiamis2022statistical}. When the noise model is strictly realizable---the variables $W_{1:n}$ form a martingale difference sequence with respect to the filtration generated by $X_{1:n}$---identification is possible at the \iid\ rate even in the absence of mixing \citep{simchowitz2018learning, faradonbeh2018finite, sarkar2019near}. Naturally, our results do not cover the mixing-free regime as we consider: (1) the agnostic setting in which self-normalized martingale arguments 
\citep{pena2009self,abbasi2011improved}
are not available; and (2) excess risk bounds instead of parameter identification---it seems unlikely that \eqref{eq:mainthm:maineq} holds without some notion of stochastic stability 
due to the presence of $\Sigma_X$ on the left hand side.

More generally---moving beyond linear time-series models---several authors have considered learning under various weak dependency notions. \cite{kuznetsov2017generalization} give generalization bounds in a more general setting using the same blocking technique---due to \cite{yu1994mixing}---used here. Statements similar in spirit can also be found in e.g., \cite{steinwart2009fastlearningmixing}, \cite{duchi2012ergodicmd} and most recently \cite{roy2021dependent}. However, they all suffer the dependency deflation discussed above and in our introduction (\Cref{sec:intro}). We also note that \cite{ziemann2022learning} obtain rates for strictly realizable square loss that---similar to ours here---relegate mixing times into additive burn-in factors. While they treat more general hypothesis classes, they do not go beyond strict realizability, and their analysis rests on the assumption that the noise interaction term is a martingale difference sequence.

\section{Proof Overview}\label{sec:technical}
\Cref{thm:themainthm} is the direct consequence of two separate results:  \Cref{thm:thenoiseterm}, which controls the centered noise  \eqref{eq:noiseinteractionrwcentered} term, and \Cref{thm:loweriso},
which bounds the lower tail of the normalized empirical covariance matrix \eqref{eq:empcovdef}. 
We prove \Cref{thm:thenoiseterm} by blocking the Fuk-Nagaev inequality of \cite{einmahl2008characterization}, and \Cref{thm:loweriso} by a truncation argument combined with blocking. These results are found in \Cref{sec:randomwalks} and \Cref{sec:lowertail}. Our proof idea is heavily inspired by that of \cite{oliveira2016lower} and the idea is very much to adjust his approach in such a way that blocking does not affect the leading term in the rate.\footnote{At a high level, for \iid\ data, this proof strategy first appeared in the journal version of \cite{oliveira2016lower}.} As either result relies on blocking, it is now pertinent to describe this technique in a little more detail.

\subsection{Blocking}\label{sec:blocking}

Recall that we partition $[n]$ into $2m$ consecutive intervals, denoted $a_j$ for $j \in [2m]$, so that $\sum_{j=1}^{2m} |a_j| = n$. Denote further by $O$ (resp. by $E$) the union of the oddly (resp. evenly) indexed subsets of $[n]$. We further abuse notation by writing $\beta_Z(a_i) = \beta_Z(|a_i|)$ in the sequel.

We split the process $Z_{1:n}$ as:
\begin{equation}\label{eq:blockstructure}
    Z^o_{1:|O|} \triangleq (Z_{a_1},\dots,Z_{a_{2m-1}}), \quad Z^e_{1:|E|} \triangleq (Z_{a_2},\dots,Z_{a_{2m}}).
\end{equation}
Let $\tilde Z^o_{1:|O|}$ and $\tilde Z^e_{1:|E|}$ be blockwise decoupled versions of \eqref{eq:blockstructure}. That is we posit that $\tilde Z^o_{1:|O|} \sim \sfP_{\tilde Z^o_{1:|O|}}$ and $\tilde Z^e_{1:|E|} \sim \sfP_{\tilde Z^e_{1:|E|}}$, where:
\begin{equation}\label{eq:blocktensor}
    \sfP_{\tilde Z^o_{1:|O|}} \triangleq \sfP_{Z_{a_1}} \otimes \sfP_{Z_{a_3}}\otimes  \dots \otimes \sfP_{Z_{a_{2m-1}}} 
    \quad \textnormal{and}\quad
    \sfP_{\tilde Z^e_{1:|E|}} \triangleq \sfP_{Z_{a_2}} \otimes \sfP_{Z_{a_4}} \otimes\dots \otimes \sfP_{Z_{a_{2m}}}.
\end{equation}
The process $\tilde Z_{1:n}$ with the same marginals as $\tilde Z^o_{1:|O|}$ and $\tilde Z^e_{1:|E|}$ is said to be the decoupled version of $Z_{1:n}$. To be clear: $\sfP_{\tilde Z_{1:n}} \triangleq \sfP_{Z_{a_1}} \otimes \sfP_{Z_{a_2}}\otimes  \dots \otimes \sfP_{Z_{a_{2m}}}$, so that $\tilde Z^o_{1:|O|}$ and $\tilde Z^e_{1:|E|}$ are alternatingly embedded in  $\tilde Z_{1:n}$. The following result is key---by skipping every other block, $\tilde Z_{1:n}$ may be used in place of $Z_{1:n}$ for evaluating scalar functions at the cost of an additive mixing-related term.

\begin{proposition}[Lemma 2.6 in \cite{yu1994mixing}; Proposition 1 in \cite{kuznetsov2017generalization}]
\label{prop:decoupling}
Fix a $\beta$-mixing process $Z_{1:n}$ and let $\tilde Z_{1:n}$ be its decoupled version.  For any measurable function $f$ of $Z^o_{1:|O|}$ (resp.\ $g$ of $Z^e_{1:|E|}$)
with joint range $[0,1]$ we have that:
\begin{equation}
    \begin{aligned}
        | \E(f(Z^o_{1:|O|})) - \E (f(\tilde Z^o_{1:|O|})) | &\leq \sum_{i \in E\setminus \{2m\}} \beta_Z(a_i),\\
        | \E(g(Z^e_{1:|E|})) - \E (g(\tilde Z^e_{1:|E|})) | &\leq \sum_{i \in O\setminus\{1\}} \beta_Z(a_i).
    \end{aligned}
\end{equation}
\end{proposition}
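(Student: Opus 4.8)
\textbf{Proof proposal for \Cref{prop:decoupling}.}

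The plan is to prove this as a standard consequence of the coupling characterization of total variation distance, applied inductively over the blocks. First I would reduce to the odd-block statement (the even-block statement is symmetric, with the roles of $O$ and $E$ swapped and the sum running over $O\setminus\{1\}$ rather than $E\setminus\{2m\}$). Write $U_j \triangleq Z_{a_{2j-1}}$ for $j\in\{1,\dots,m\}$, so that $Z^o_{1:|O|}$ is the concatenation $(U_1,\dots,U_m)$ and $\tilde Z^o_{1:|O|}$ has law $\sfP_{U_1}\otimes\cdots\otimes\sfP_{U_m}$. The quantity to control is $|\E f(U_1,\dots,U_m) - \E f(\tilde U_1,\dots,\tilde U_m)|$ for $f$ with range in $[0,1]$, which is at most $\tvnorm{\sfP_{U_{1:m}} - \bigotimes_{j=1}^m \sfP_{U_j}}$ since bounded functions with range in an interval of length one are contractions into $[0,1]$ and $\tvnorm{\sfP - \sfQ} = \sup_{\|f\|_\infty \le 1, \mathrm{range}(f)\subseteq[0,1]}|\E_\sfP f - \E_\sfQ f|$.

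The main step is then a telescoping/hybrid argument bounding this total variation distance by a sum of one-step conditional mixing terms. I would introduce the hybrid measures $\sfQ_\ell \triangleq \sfP_{U_{1:\ell}} \otimes \sfP_{U_{\ell+1}} \otimes \cdots \otimes \sfP_{U_m}$ interpolating between $\sfQ_m = \sfP_{U_{1:m}}$ and $\sfQ_0 = \bigotimes_j \sfP_{U_j}$, and bound $\tvnorm{\sfQ_m - \sfQ_0} \le \sum_{\ell=1}^{m-1}\tvnorm{\sfQ_{\ell+1} - \sfQ_\ell}$ by the triangle inequality (one may start the telescope at $\ell=1$ since $\sfQ_1$ already has $U_1$ independent of the rest in both; actually $\sfQ_1 = \sfP_{U_1}\otimes\sfP_{U_2}\otimes\cdots$ and $\sfQ_m$ differ, so the sum is over $\ell = 1,\dots,m-1$, giving $m-1$ terms — matching the $m-1$ indices in $E\setminus\{2m\}$). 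Each increment $\tvnorm{\sfQ_{\ell+1}-\sfQ_\ell}$ differs only in whether $U_{\ell+1}$ is sampled from its conditional law given $U_{1:\ell}$ or from its marginal, with everything downstream ($U_{\ell+2},\dots,U_m$) sampled identically (as a product of marginals) and independently in both. Conditioning on $U_{1:\ell}$ and integrating out the downstream coordinates (which cancel), this reduces to $\E_{U_{1:\ell}} \tvnorm{\sfP_{U_{\ell+1}}(\cdot \mid U_{1:\ell}) - \sfP_{U_{\ell+1}}}$, and since $U_{1:\ell}$ is a measurable function of $Z_{1:t}$ for $t = \max a_{2\ell-1}$, the data-processing inequality for total variation gives $\E_{U_{1:\ell}}\tvnorm{\sfP_{U_{\ell+1}}(\cdot\mid U_{1:\ell}) - \sfP_{U_{\ell+1}}} \le \E_{Z_{1:t}}\tvnorm{\sfP_{Z_{a_{2\ell+1}}}(\cdot\mid Z_{1:t}) - \sfP_{Z_{a_{2\ell+1}}}}$. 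This last expectation is bounded by $\beta_Z(|a_{2\ell}|)$ by the definition \eqref{eq:betacoeffs}, because the gap between block $a_{2\ell-1}$ (which ends at $t$) and block $a_{2\ell+1}$ (which starts at $t + |a_{2\ell}| + 1$) is at least $|a_{2\ell}|$, and $\beta_Z$ is non-increasing in the sense required; summing over $\ell$ gives $\sum_{\ell=1}^{m-1}\beta_Z(|a_{2\ell}|) = \sum_{i\in E\setminus\{2m\}}\beta_Z(a_i)$.

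The only genuinely delicate point is the bookkeeping around which conditional $\beta$-coefficient controls each hybrid increment: one must verify that the relevant conditional-versus-marginal TV gap for $U_{\ell+1} = Z_{a_{2\ell+1}}$ given the past through the end of block $a_{2\ell-1}$ is dominated by $\beta_Z$ evaluated at the \emph{separating} block length $|a_{2\ell}|$ — this uses that the supremum in \eqref{eq:betacoeffs} is over all valid start times $t$ and that conditioning on more of the past (all of $Z_{1:t}$ rather than just $U_{1:\ell}$) only increases the TV distance, so no monotonicity-in-$i$ assumption on $\beta_Z$ beyond what the definition supplies is actually needed. Everything else — the coupling characterization of TV, data processing, the hybrid telescope — is routine. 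I would also remark that this is verbatim Lemma 2.6 of \cite{yu1994mixing} (see also Proposition 1 of \cite{kuznetsov2017generalization}), so in the paper it suffices to cite it; I include the sketch only for completeness.
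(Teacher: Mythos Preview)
The paper does not prove \Cref{prop:decoupling}; it is simply quoted from \cite{yu1994mixing} and \cite{kuznetsov2017generalization}, exactly as you note at the end of your proposal. Your hybrid/telescoping sketch is the standard argument underlying those references and is correct. One small caveat worth flagging: as literally written, the paper's definition \eqref{eq:betacoeffs} controls only the conditional law of a \emph{single} future coordinate $Z_{t+i}$, not of an entire future block $Z_{a_{2\ell+1}}$, so the step ``bounded by $\beta_Z(|a_{2\ell}|)$ by the definition \eqref{eq:betacoeffs}'' implicitly needs the usual $\sigma$-field formulation of $\beta$-mixing used in \cite{yu1994mixing} --- this is a quirk of the paper's abbreviated definition rather than a flaw in your reasoning.
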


The following corollary to \Cref{prop:decoupling} is convenient for controlling norms of random walks. 
\begin{corollary}[Lemma 3 in \cite{kuznetsov2017generalization}]\label{cor:decoupling}
Let $Z_{1:n}$ be a $\beta$-mixing process taking values in a normed space $(\mathsf{Z},\|\cdot\|)$, and let $\tilde Z_{1:n}$ be its decoupled version. For any $\e  \geq  \E \left\| \frac{1}{|O|}\sum_{i\in O} \tilde Z_i \right\| \vee \E \left\| \frac{1}{|E|}\sum_{i\in E} \tilde Z_i \right\|$ 
we have that:
\begin{multline}
    \mathbf{P}\left( \left\| \frac{1}{n}\sum_{i=1}^n Z_i \right\| > \e \right) \leq \sum_{i=1}^{2m} \beta_Z(a_i)
    \\
    +
    \mathbf{P}\left( \left\| \frac{1}{|O|}\sum_{i\in O} \tilde Z_i \right\| >  \E \left\| \frac{1}{|O|}\sum_{i\in O} \tilde Z_i \right\|+\e_o \right)
     +
     \mathbf{P}\left(  \left\| \frac{1}{|E|}\sum_{i\in E}  \tilde Z_i \right\|>\E \left\| \frac{1}{|E|}\sum_{i\in E} \tilde Z_i \right\|+\e_e\right),
\end{multline}
where $\e_o = \e - \E \left\| \frac{1}{|O|}\sum_{i\in O}\tilde  Z_i \right\|$ and $\e_e = \e - \E \left\| \frac{1}{|E|}\sum_{i\in E}\tilde  Z_i \right\|$. 
\end{corollary}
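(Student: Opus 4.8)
The plan is to derive \Cref{cor:decoupling} from \Cref{prop:decoupling} by a routine ``$[0,1]$-ification'' of the relevant event indicators, applied separately to the odd and even subprocesses, followed by a union bound. First I would observe that the event $\{ \| \frac{1}{n}\sum_{i=1}^n Z_i \| > \e\}$ needs to be split according to the contribution of the odd and even blocks. By the triangle inequality, writing $n_O = |O|$, $n_E = |E|$ and noting $n = n_O + n_E$, we have
\begin{equation*}
    \left\| \frac{1}{n}\sum_{i=1}^n Z_i \right\| \leq \frac{n_O}{n} \left\| \frac{1}{n_O}\sum_{i\in O} Z_i \right\| + \frac{n_E}{n}\left\| \frac{1}{n_E}\sum_{i\in E} Z_i \right\|,
\end{equation*}
so that the event on the left forces at least one of the two averaged block sums to exceed its own mean by the corresponding slack $\e_o$ or $\e_e$ (here one uses that $\e \geq \E\|\frac{1}{n_O}\sum_{i\in O}\tilde Z_i\| \vee \E\|\frac{1}{n_E}\sum_{i\in E}\tilde Z_i\|$ guarantees $\e_o,\e_e \geq 0$, and that a convex combination exceeding $\e$ forces one of the terms to exceed $\e$). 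This yields the union bound
\begin{equation*}
    \mathbf{P}\left( \left\| \tfrac{1}{n}\sum_{i=1}^n Z_i \right\| > \e \right) \leq \mathbf{P}\left( \left\| \tfrac{1}{n_O}\sum_{i\in O} Z_i \right\| > \E\left\| \tfrac{1}{n_O}\sum_{i\in O}\tilde Z_i\right\| + \e_o\right) + \mathbf{P}\left( \left\| \tfrac{1}{n_E}\sum_{i\in E} Z_i \right\| > \E\left\| \tfrac{1}{n_E}\sum_{i\in E}\tilde Z_i\right\| + \e_e\right).
\end{equation*}

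Next I would transfer each of these two probabilities from the true process to the decoupled one. Fix the odd term. The set $A_o \triangleq \{ z^o : \| \frac{1}{n_O}\sum_{i\in O} z_i \| > \E\| \frac{1}{n_O}\sum_{i\in O}\tilde Z_i\| + \e_o \}$ is a measurable subset of the range of $Z^o_{1:|O|}$, so $f \triangleq \ind_{A_o}$ is a measurable function with range in $\{0,1\}\subseteq[0,1]$. Applying \Cref{prop:decoupling} to $f$ gives
\begin{equation*}
    \mathbf{P}\left( Z^o_{1:|O|}\in A_o \right) \leq \mathbf{P}\left( \tilde Z^o_{1:|O|}\in A_o \right) + \sum_{i\in E\setminus\{2m\}}\beta_Z(a_i),
\end{equation*}
and symmetrically for the even term with the mixing sum $\sum_{i\in O\setminus\{1\}}\beta_Z(a_i)$. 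Since the decoupled process has the same marginals per block, $\tilde Z^o_{1:|O|}\in A_o$ is exactly the event $\{ \|\frac{1}{|O|}\sum_{i\in O}\tilde Z_i\| > \E\|\frac{1}{|O|}\sum_{i\in O}\tilde Z_i\| + \e_o\}$ appearing in the statement. Adding the two transferred bounds and noting that $\sum_{i\in E\setminus\{2m\}}\beta_Z(a_i) + \sum_{i\in O\setminus\{1\}}\beta_Z(a_i) \leq \sum_{i=1}^{2m}\beta_Z(a_i)$ produces the claimed inequality.

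There is essentially no deep obstacle here; the statement is a soft corollary. The one point that requires a little care—and which I would treat as the ``main'' step to get right—is the bookkeeping in the first display: one must split the normalized sum using the triangle inequality with the \emph{correct} weights $n_O/n$ and $n_E/n$ rather than $1/2$ each (the blocks need not be of equal total length), verify that $\e_o,\e_e\geq 0$ so that the slacks make sense, and check that the deterministic inequality ``$\theta a + (1-\theta) b > \e$ with $\theta\in[0,1]$ implies $a > \e$ or $b > \e$'' is being used in the form needed, namely that if $\|\frac{1}{n_O}\sum Z_i\| \le \E\|\frac{1}{|O|}\sum\tilde Z_i\| + \e_o = \e$ and likewise for the even part, then the convex combination is at most $\e$. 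A secondary subtlety is measurability of $A_o$ as a subset of the (possibly infinite-dimensional) block-range $\mathsf{Z}^{|a_1|}\times\cdots$, but this is immediate since the norm and finite sums are continuous, hence Borel, maps.
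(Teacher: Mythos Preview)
Your proof is correct and follows essentially the same route as the paper's: apply the triangle inequality with the weights $|O|/n$ and $|E|/n$, deduce via the convex-combination argument that one of the two block averages must exceed $\e$, rewrite $\e$ as mean plus slack, and then invoke \Cref{prop:decoupling} on the indicator functions of the resulting events. Your treatment is in fact slightly more careful than the paper's in flagging the nonnegativity of $\e_o,\e_e$ and the measurability of the indicator, but the argument is identical in substance.
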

In short, up to a mild failure additional failure probability term, we only need to control the tensor product processes \eqref{eq:blocktensor}.

\subsection{Dependent Random Walks}\label{sec:randomwalks}

Once equipped with \Cref{cor:decoupling}, we still require control of the independent blocks. The following Fuk-Nagaev inequality due to \cite{einmahl2008characterization} provides such control. 

\begin{theorem}[Theorem 4 in \cite{einmahl2008characterization}]\label{thm:fnineq}
Fix $s>2$, a separable normed space $(\mathsf{U},\|\cdot \|)$ and a $\mathsf{U}$-valued sequence $U_{1:n}$ of independent random variables. Assume that $\E \| U_i\|^s < \infty$ for $i \in [n]$. Then for any $\e \in (0,\infty)$, $\eta \in (0,1]$, and $t \geq 0$, we have that:
\begin{multline}\label{eq:fnineq}
    \mathbf{P} \left( \max_{k\in [n]} \left\|\sum_{i=1}^k 
 U_i\right\| \leq (1+\eta)\E\left\|\sum_{i=1}^n 
 U_i\right\| +(1+9\e)t \right)
 \\
 \leq
 \exp\left(-\frac{t^2}{(2+\eta) \Lambda} \right)+ C_{\e,\eta,s} \sum_{i=1}^n \frac{\E \| U_i\|^s}{t^s},
\end{multline}
where $\Lambda \triangleq \sup_{v \in \mathscr{S}^*} \E \sum_{i=1}^n v^2(U_i)$ and where $\mathscr{S}^*$ is unit disk in the dual space of $(\mathsf{U},\|\cdot \|)$. Moreover, we may take $ C_{\e,\eta,s}= \left(1+  (2s/e)^{2s}(2 (1+2/\e)(3+4/\eta))^2+\e^{-s}\right)$.\footnote{The constant $ C_{\e,\eta,s}$ is not specified exactly in \cite{einmahl2008characterization}. This constant is easy to obtain by observing that their $K_s$, which may be taken to be the best constant such that $(\log x)^{2s}\leq K_s x$ for $x \geq 1$, is upper-bounded by $(2s/e)^{2s}$.}
\end{theorem}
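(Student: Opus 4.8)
}
This is a functional Fuk--Nagaev inequality, and the route I would take is the classical one: combine a sharp concentration inequality for suprema of bounded empirical processes with a truncation argument peeling off the heavy tails, and then spend the bulk of the effort on constants. The starting observation is that the maximum of partial sums is itself a supremum of an empirical process,
\[
  \max_{k\in[n]}\Bigl\|\sum_{i=1}^k U_i\Bigr\|
  = \sup_{v\in\mathscr{S}^*,\,k\in[n]}\ \sum_{i=1}^n v(U_i)\,\mathbf{1}[i\le k],
\]
indexed by the separable class $\mathscr{S}^*\times[n]$, with per-coordinate functions $u\mapsto v(u)\mathbf{1}[i\le k]$ allowed to depend on $i$. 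Its weak variance $\sup_{v,k}\sum_{i\le k}\E v(U_i)^2$ is at most $\Lambda=\sup_{v\in\mathscr{S}^*}\sum_{i=1}^n\E v(U_i)^2$, which is exactly why $\Lambda$ is the right variance proxy and why the indicators cost nothing. (One could instead first pass from the maximum to $\|\sum_{i=1}^n U_i\|$ via a maximal inequality of L\'evy or Montgomery--Smith type, but that is lossy in constants, so keeping the max inside the supremum is preferable for the sharp $1+\eta$ and $1+9\e$ factors.)

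\textbf{Truncation.} Since only $s$ moments are assumed, fix a truncation radius $y\asymp t$ (illustratively $y=t/s$) and split $U_i=U_i'+U_i''$ with $U_i'\triangleq U_i\mathbf{1}[\|U_i\|\le y]$. Markov's inequality against the $s$-th moment controls everything this introduces: (i) the chance of a big jump, $\Pr(\exists i:\|U_i\|>y)\le\sum_i\E\|U_i\|^s/y^s\asymp s^s\sum_i\E\|U_i\|^s/t^s$; (ii) the shift in the envelope of the sum, $\E\|\sum_i(U_i-U_i')\|\le\sum_i\E\|U_i\|\,\mathbf{1}[\|U_i\|>y]\le\sum_i\E\|U_i\|^s/y^{s-1}$; and (iii) the analogous shift of the truncated means $\sum_i\E U_i'$. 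On the complement of the big-jump event $\sum_i U_i=\sum_i U_i'$; and, crucially, whenever the estimate in (i) is nontrivial (i.e.\ $<1$, the only regime in which there is anything to prove) one automatically has $t\gtrsim(\sum_i\E\|U_i\|^s)^{1/s}$, so (ii) and (iii) are at most a tiny multiple of $t$ and can be hidden inside the $9\e t$ slack in the location, with the $\eta$-inflation of $\E\|\sum_i U_i\|$ supplying a second reservoir of slack.

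\textbf{Concentration of the truncated part.} To the bounded process $\sup_{v,k}\sum_i\bigl(v(U_i')-\E v(U_i')\bigr)\mathbf{1}[i\le k]$ apply Talagrand's concentration inequality for empirical-process suprema, in a version with explicit constants: with envelope $\|U_i'\|\le y\asymp t$ and weak variance $\le\Lambda$ this yields a Bernstein-type deviation bound around its own mean --- the latter being at most $\E\|\sum_i U_i\|$ plus a moment error --- whose denominator is $2\Lambda$ plus terms of order $y$ times ($t$ and that mean). Taking the deviation to be a constant multiple of $t$ and exploiting the $(1+9\e)$-room, these linear-in-$t$ contaminations are dominated by a controlled fraction of $\Lambda$ in the moderate-deviation range, so the exponent sharpens to $t^2/((2+\eta)\Lambda)$ there, while the complementary large-$t$ range is absorbed by the polynomial term together with a Poissonian deviation estimate. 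Tracking how $\eta$ pays for the mean inflation, $\e$ for the location room, and the truncation errors for a larger $C_{\e,\eta,s}$ --- whose exact shape comes from $(\log x)^{2s}\le(2s/e)^{2s}x$ as in the footnote --- assembles the claimed $\exp(-t^2/((2+\eta)\Lambda))+C_{\e,\eta,s}\sum_i\E\|U_i\|^s/t^s$.

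\paragraph{Main obstacle.} The conceptual skeleton is routine; the genuine difficulty --- and the reason this is imported from \cite{einmahl2008characterization} rather than reproved --- is the constant bookkeeping. One must keep the sub-Gaussian denominator at $(2+\eta)\Lambda$ with \emph{no} residual envelope-times-$t$ or envelope-times-mean terms surviving, which forces both a careful coupling of $y$ to $t$ and, most likely, a split into a moderate-deviation regime (handled by Bernstein) and a large-deviation regime (handled Poissonian-style and feeding the polynomial term), glued so that the $\eta$- and $9\e$-slacks simultaneously absorb all three truncation errors uniformly in $t$ --- all in an arbitrary separable Banach space, where the only handles are the dual-ball functional $\Lambda$ and $\E\|\sum_i U_i\|$. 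Making every one of these line up with near-optimal constants is the crux.
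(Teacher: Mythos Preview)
The paper does not prove \Cref{thm:fnineq}: it is quoted verbatim as Theorem~4 of \cite{einmahl2008characterization} and used as a black box in the proof of \Cref{thm:thenoiseterm}. The only contribution the paper makes to the statement is the footnote, which pins down the constant $C_{\e,\eta,s}$ by observing that the unspecified $K_s$ in Einmahl--Li (the best constant with $(\log x)^{2s}\le K_s x$ for $x\ge 1$) is at most $(2s/e)^{2s}$; you already reproduce this observation at the end of your sketch. So there is no ``paper's own proof'' to compare against --- you have, in effect, sketched a proof of a cited lemma, and you explicitly acknowledge as much.

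For what it is worth, your outline (truncate at a level $\asymp t$, control the bounded piece by Talagrand's inequality for suprema of empirical processes, and absorb all truncation errors into the polynomial tail and the $\eta,\e$ slack) is indeed the architecture of the Einmahl--Li argument, and your identification of the main obstacle --- keeping the sub-Gaussian denominator clean at $(2+\eta)\Lambda$ while simultaneously managing the envelope--mean cross terms --- is accurate. Nothing in the paper would let one adjudicate the finer details of your sketch, since the authors simply invoke the result.
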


If our data were drawn independently, \Cref{thm:fnineq} would give us the required control of the random walk \eqref{eq:noiseinteractionrwcentered}. The right hand side of \eqref{eq:fnineq} consists of a \emph{mixed tail}: (1) a sub-Gaussian term with a CLT-like weak variance term $\Lambda$; and (2) a polynomial term accounting for the fact that we only imposed the existence of $s$ moments. With the preliminary results \Cref{cor:decoupling} and \Cref{thm:fnineq} in place, we are now in position to control dependent random walks of the form \eqref{eq:noiseinteractionrwcentered}.

\begin{theorem}\label{thm:thenoiseterm}
Fix $s>2$, a separable normed space $(\mathsf{Z},\|\cdot \|)$, constants $\e,\eta >0$, and set $ C_{\e,\eta,s}= \left(1+  (2s/e)^{2s}(2 (1+2/\e)(3+4/\eta))^2+\e^{-s}\right)$. Fix also a consecutive partition $a_{1:2m}$ of $[n]$ and  let $Z_{1:n}$ be a mean zero, $\beta$-mixing process taking values in $\mathsf{Z}$ with block decoupled version $\tilde Z_{1:n}$. Let $O$ be the union of the odd $a_i$ and $E$ be the union of the even $a_i$. Assume that $\E \| Z_i\|^s < \infty$ for $i \in [n]$. For every $\e,\eta >0$ and $\delta \in (0,1)$, we have that:
 \begin{multline}\label{eq:noisetermeqbound}
      \mathbf{P} \left( \left\|\frac{1}{n}\sum_{i=1}^n Z_i\right\| \geq \max_{\sgn\in \{O,E \}} \sqrt{\frac{\Lambda_{\sgn}}{|\sgn|}}\left((1+2\eta)\sqrt{r}  
+
(1+9\e)  \sqrt{ (2+\eta) \log (1/\delta)}\right) \right)\\
\leq 2\delta+  \sum_{i=2}^{2m-1} \beta_Z(a_i)
 +
\frac{C_{\e,\eta,s}(1+9\e)^s}{r^{s/2}\eta^s}  \sum_{\sgn\in\{O,E\}}\sum_{i\in[2m]: a_i \subset \sgn} \frac{\E \left\| \sum_{j\in a_i} Z_j \right\|^s}{{|\sgn|}^{s/2}\Lambda_{\sgn}^{s/2}},
\end{multline}
where  $\Lambda_{\sgn} \triangleq \sup_{v \in \mathscr{S}^*} \frac{1}{|\sgn|}\sum_{a_i \subset \sgn} \E v^2\left(\sum_{j \in a_i} Z_j \right)$
for $\sgn \in \{O,E\}$, $\mathscr{S}^*$ is unit disk in the dual space of $(\mathsf{Z},\|\cdot \|)$, and
 $\sqrt{r} \geq \max_{\sgn\in \{O,E \}}\frac{\E\left\| \frac{1}{\sqrt{|\sgn|}}\sum_{i\in \sgn} 
 \tilde Z_i\right\|}{\sqrt{\Lambda_{\sgn} }}$. 
\end{theorem}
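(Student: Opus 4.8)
The plan is to combine \Cref{cor:decoupling} with the Fuk--Nagaev inequality \Cref{thm:fnineq}, applied separately to the decoupled odd-block and even-block processes. First I would set $\e$ and $\eta$ to the prescribed values and define, for $\sgn \in \{O,E\}$, the $|\sgn|$-indexed sequence of block sums $U^{\sgn}_{k} \triangleq \sum_{j \in a_{i_k}} Z_j$, where $i_1 < i_2 < \cdots$ enumerate the indices of the blocks contained in $\sgn$. Under the decoupled law $\sfP_{\tilde Z_{1:n}}$ from \eqref{eq:blocktensor}, these block sums are independent, so \Cref{thm:fnineq} applies with this sequence; the weak variance parameter $\Lambda$ there becomes exactly $|\sgn|\,\Lambda_{\sgn}$ after matching the definitions (note $\Lambda_\sgn$ carries a $1/|\sgn|$ normalization). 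I would invoke \eqref{eq:fnineq} with the threshold $t \triangleq \sqrt{(2+\eta)|\sgn|\Lambda_\sgn \log(1/\delta)}$, which makes the sub-Gaussian term $\exp(-t^2/((2+\eta)\Lambda))$ equal to $\delta$, and with the free parameter of that theorem chosen so the polynomial term becomes the last summand in \eqref{eq:noisetermeqbound}. Summing the two resulting bounds (one per $\sgn$) and rescaling by $1/n$ versus $1/|\sgn|$ yields the $\max_{\sgn}$ over the two scales that appears on the left of \eqref{eq:noisetermeqbound}.

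The second ingredient is to translate the decoupled-law tail bound back to the original $\beta$-mixing law. Here I would apply \Cref{cor:decoupling} to the normed-space-valued sequence $Z_{1:n}$: it tells us that $\Pr(\|\frac1n\sum Z_i\| > \e)$ is controlled, up to the additive mixing term $\sum_{i=1}^{2m}\beta_Z(a_i)$ (which I would trim to $\sum_{i=2}^{2m-1}\beta_Z(a_i)$ using that the first and last terms get absorbed, consistent with \Cref{prop:decoupling}'s index sets $E\setminus\{2m\}$ and $O\setminus\{1\}$), by the sum of two tail probabilities for the decoupled odd and even empirical means centered at their own expectations. The quantity $\sqrt r$ enters precisely as the hypothesis $\e \geq \E\|\frac1{|O|}\sum_{i\in O}\tilde Z_i\| \vee \E\|\frac1{|E|}\sum_{i\in E}\tilde Z_i\|$ of \Cref{cor:decoupling}: the stated lower bound $\sqrt r \geq \max_\sgn \E\|\frac{1}{\sqrt{|\sgn|}}\sum_{i\in\sgn}\tilde Z_i\|/\sqrt{\Lambda_\sgn}$ guarantees the $(1+2\eta)\sqrt r$ term dominates the mean, so the centered tail from \Cref{thm:fnineq} (which controls $\max_k\|\sum_{i\le k}U_i\|$, hence in particular $k=|\sgn|$, around $(1+\eta)\E\|\sum U_i\|$) can be upper bounded by the uncentered event in the conclusion.

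The bookkeeping obstacle — and the step I expect to be most delicate — is reconciling the normalizations: \Cref{thm:fnineq} is stated for partial sums $\sum_{i=1}^k U_i$ with an \emph{additive} centering $(1+\eta)\E\|\sum U_i\|$ and absolute threshold $(1+9\e)t$, whereas the target \eqref{eq:noisetermeqbound} is phrased in terms of the \emph{averaged} walk $\frac1n\sum Z_i$ with a threshold built from $\sqrt{\Lambda_\sgn/|\sgn|}$ times $(1+2\eta)\sqrt r + (1+9\e)\sqrt{(2+\eta)\log(1/\delta)}$. Matching these requires: (i) dividing through by $|\sgn|$ and noting $\E\|\frac1{|\sgn|}\sum U_i\| = |\sgn|^{-1/2}\cdot \E\|\frac1{\sqrt{|\sgn|}}\sum U_i\| \le |\sgn|^{-1/2}\sqrt{\Lambda_\sgn}\sqrt r$, so that $(1+\eta)$ times the mean plus the slack from \Cref{cor:decoupling} combine into the $(1+2\eta)\sqrt r$ coefficient; (ii) checking the polynomial remainder: $\sum_i \E\|U^\sgn_i\|^s / t^s$ with $t^s = ((2+\eta)|\sgn|\Lambda_\sgn\log(1/\delta))^{s/2}$ must be matched against the displayed term, which instead carries $r^{s/2}\eta^s/(C_{\e,\eta,s}(1+9\e)^s)$ in the denominator — this works out by absorbing the $\log(1/\delta)$ into a redefinition and using $r \geq$ (something)$/\eta^2$ implicitly via the slack; I would handle this by choosing the free parameter in \Cref{thm:fnineq} as a function of $\eta$ and $\e$ so the constant collapses to exactly $C_{\e,\eta,s}$. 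Once the two per-$\sgn$ inequalities are in hand with these substitutions, a union bound over $\sgn \in \{O,E\}$ (contributing the factor $2\delta$ and doubling the polynomial sum into $\sum_{\sgn}$) and reassembly through \Cref{cor:decoupling} delivers \eqref{eq:noisetermeqbound}.
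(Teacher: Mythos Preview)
Your overall architecture is exactly the paper's: split via \Cref{cor:decoupling} into odd and even decoupled blocks, apply \Cref{thm:fnineq} to each, and union-bound. The trimming of the $\beta$-sum to $\sum_{i=2}^{2m-1}$ and the identification of $\Lambda$ in \Cref{thm:fnineq} with $|\sgn|\Lambda_\sgn$ (after the $1/|\sgn|$ normalization) are both correct.

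The genuine gap is in your step (ii). You commit to a \emph{single} choice $t = \sqrt{(2+\eta)|\sgn|\Lambda_\sgn\log(1/\delta)}$, which indeed forces the sub-Gaussian term in \eqref{eq:fnineq} to equal $\delta$. But with that $t$, the polynomial remainder is
\[
C_{\e,\eta,s}\sum_{a_i\subset\sgn}\frac{\E\|\sum_{j\in a_i}Z_j\|^s}{|\sgn|^{s/2}\Lambda_\sgn^{s/2}\,\big((2+\eta)\log(1/\delta)\big)^{s/2}},
\]
which carries a $\log(1/\delta)$ factor in the denominator, not the $r^{s/2}\eta^s/(1+9\e)^s$ that the statement demands. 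There is no additional ``free parameter'' in \Cref{thm:fnineq} to fix this: the only knobs are $\e,\eta,t$, and the first two are already pinned down by $C_{\e,\eta,s}$. Your proposed patch (``absorbing the $\log(1/\delta)$ into a redefinition'') does not work because the theorem must hold for \emph{every} $\delta\in(0,1)$, and the polynomial term in \eqref{eq:noisetermeqbound} is $\delta$-free.

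The paper's resolution is to take $t_\sgn$ proportional to $\sqrt{\Lambda_\sgn/|\sgn|}\cdot\sqrt{c\vee(2+\eta)\log(1/\delta)}$ for an auxiliary constant $c>0$. The $\log(1/\delta)$ branch of the max guarantees the sub-Gaussian term is $\leq\delta$, while the $c$ branch gives $t_\sgn^s\geq \Lambda_\sgn^{s/2}c^{s/2}/|\sgn|^{s/2}$, so the polynomial term acquires a clean $c^{-s/2}$ factor independent of $\delta$. On the deviation side, $\sqrt{c\vee(2+\eta)\log(1/\delta)}\leq\sqrt{c}+\sqrt{(2+\eta)\log(1/\delta)}$, so the threshold becomes $(1+\eta)\sqrt{r}+(1+9\e)\sqrt{c}+(1+9\e)\sqrt{(2+\eta)\log(1/\delta)}$ (times $\sqrt{\Lambda_\sgn/|\sgn|}$). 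Finally choosing $c=r\eta^2/(1+9\e)^2$ collapses the first two pieces to $(1+2\eta)\sqrt{r}$ and produces exactly the $r^{s/2}\eta^s/(1+9\e)^s$ denominator in the polynomial tail. This is also where the extra $\eta\sqrt{r}$ you attributed to ``slack from \Cref{cor:decoupling}'' actually comes from---it arises from the $\sqrt{c}$ piece of $t$, not from the decoupling step.
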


In \Cref{thm:thenoiseterm} we have combined the blocking technique with the Fuk-Nagaev inequality \eqref{eq:fnineq}. The right hand side of \eqref{eq:noisetermeqbound} is exactly as in \eqref{eq:fnineq} but instantiated to our setting and with extra additive mixing-related term.

\subsection{The Lower Tail of the Empirical Covariance Matrix}\label{sec:lowertail}
We now proceed to analyze the lower tail of the empirical covariance matrix \eqref{eq:empcovdef}. 

\begin{theorem} \label{thm:loweriso}
    Fix $\delta >0$ and a consecutive partition $a_{1:2m}$ of $[n]$. Let $X_{1:n}$ be a sequence of $\beta$-mixing random variables taking values in $\R^{\dx}$ with finite fourth moment. Assume that there exists a positive number $\mathsf{h}\in \R$ such that for every $v\in \partial \Sigma_X$ and $i\in [n]$, we have $ \E \langle v, X_i\rangle^4   \leq \mathsf{h}^2 \langle v, \E [X_i  X_i^\T] v\rangle $.   There exists a positive universal constant $C\in \R$ such that as long as
\begin{equation}\label{eq:lowerisoburnin}
    n \geq C \max_{ j \in [2m]}|a_j| (\dx + \mathsf{h}^2 \log (1/\delta)) \quad \textnormal{and} \quad \sum_{i=2}^{2m-1} \beta_X(a_i) \leq \frac{\delta}{2},
\end{equation}
then 
\begin{equation}\label{eq:loweruniformlaw}
     \Pr \left(\forall v \in \R^{\dx} \: : \: \frac{1}{n}\sum_{i=1}^n \langle v, X_i \rangle^2 \geq \frac{1}{2n}\sum_{i=1}^n \E \langle v, X_i \rangle^2\right)\geq 1-\delta.
\end{equation}
\end{theorem}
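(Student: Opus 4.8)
\textbf{Proof plan for \Cref{thm:loweriso}.}
The plan is to follow the truncation-plus-blocking strategy of \cite{oliveira2016lower}, arranging the argument so that the deflation of the sample size by the block-length only appears in the burn-in \eqref{eq:lowerisoburnin} and not in the conclusion \eqref{eq:loweruniformlaw}. First I would reduce the quadratic form statement \eqref{eq:loweruniformlaw} to a uniform statement over the sphere: by homogeneity it suffices to show that with probability at least $1-\delta$, $\inf_{v \in \partial \Sigma_X} \frac{1}{n}\sum_{i=1}^n \langle v, X_i\rangle^2 \geq \tfrac12$ (after rescaling by $\Sigma_X$, equivalently a lower isometry on the unit sphere for the prefiltered design $\Sigma_X^{-1/2}X_i$). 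The moment hypothesis $\E\langle v, X_i\rangle^4 \leq \mathsf{h}^2 \langle v, \E[X_iX_i^\T]v\rangle$ is the small-ball / $L^4$--$L^2$ type assumption that makes this possible.

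Next I would introduce a truncation level of order $\mathsf{h}^2 n /|a_{\max}|$ (up to constants and possibly a $\log(1/\delta)$ factor): define $\phi_i(v) = \langle v, X_i\rangle^2 \wedge \mathrm{(level)}$, and lower bound $\frac{1}{n}\sum \langle v,X_i\rangle^2 \geq \frac{1}{n}\sum \phi_i(v)$. The truncated functions are bounded, which is exactly what is needed to apply the blocking device: using \Cref{prop:decoupling} (with $f$ a suitably normalized function of the block sums of the bounded $\phi_i$), I would pass to the decoupled process $\tilde X_{1:n}$ at the cost of $\sum_{i=2}^{2m-1}\beta_X(a_i) \leq \delta/2$, then handle the odd and even tensor-product blocks. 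Because the truncation also controls the bias $\E[\langle v,X_i\rangle^2 - \phi_i(v)]$ via the fourth-moment bound (Markov/Paley--Zygmund on $\langle v, X_i\rangle^2$ against $\mathsf{h}^2\langle v,\E[X_iX_i^\T]v\rangle$), choosing the truncation level proportional to $n/|a_{\max}|$ keeps this bias below, say, $\tfrac14 \cdot \frac1n\sum_i \E\langle v,X_i\rangle^2$ once the first burn-in condition of \eqref{eq:lowerisoburnin} holds. For the fluctuation of $\frac1n\sum_i(\phi_i(v) - \E\phi_i(v))$ uniformly over $v$, I would use a standard bounded-differences / matrix-Bernstein (or a covering + Talagrand) argument on each decoupled block, where the effective sample size is $\sim n/|a_{\max}|$ and the boundedness constant is the truncation level; this is where the factor $(\dx + \mathsf{h}^2\log(1/\delta))$ enters, and it again only affects the burn-in, since we only need the fluctuation to be $\leq \tfrac14$ rather than $\to 0$.

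Concretely the steps are: (1) rescale and reduce to a lower isometry on $\partial\Sigma_X$; (2) truncate at level $\asymp \mathsf{h}^2 n/|a_{\max}|$ and bound the truncation bias using the $L^4$ hypothesis plus the first burn-in; (3) apply \Cref{prop:decoupling} to move to the decoupled odd/even blocks, paying $\sum_{i=2}^{2m-1}\beta_X(a_i)\leq\delta/2$; (4) on each independent block, use a uniform concentration inequality for the bounded truncated quadratic forms—via an $\e$-net over $\partial\Sigma_X$ of cardinality $e^{O(\dx)}$ together with Bernstein, controlling net distortion by boundedness—to get that the empirical average of the truncated form deviates from its mean by at most a constant fraction with probability $1-\delta/2$, provided $n/|a_{\max}| \gtrsim \dx + \mathsf{h}^2\log(1/\delta)$; (5) combine (2)–(4) and chain the constants so that $\tfrac14 + \tfrac14 \leq \tfrac12$. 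I expect the main obstacle to be step (4): obtaining a genuinely uniform-over-$v$ control with the correct $\dx + \mathsf{h}^2\log(1/\delta)$ burn-in and without reintroducing a block-length factor into the leading constant—i.e.\ making sure the net argument's resolution and the truncation level are coupled correctly so that the discretization error, the truncation bias, and the stochastic fluctuation each contribute only an $O(1)$ constant, with all block-length dependence quarantined in the burn-in. The treatment of the non-stationary/non-identically-distributed case (only an averaged covariance is assumed) also requires care in that the bias bound must be done per-index $i$ before averaging.
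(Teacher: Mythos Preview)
Your high-level plan (reduce to $\partial\Sigma_X$, truncate, decouple via \Cref{prop:decoupling}, concentrate the bounded truncated process, combine) matches the paper. However, two concrete choices in your proposal would not go through as written.

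First, the truncation level. You propose truncating $\langle v,X_i\rangle^2$ at a level $T\asymp \mathsf{h}^2 n/|a_{\max}|$. With that choice each block sum is bounded by $T|a_{\max}|\asymp \mathsf{h}^2 n$ and the block variance is $\lesssim T n|a_{\max}|\asymp \mathsf{h}^2 n^2$, so Bernstein for a deviation of order $n\cdot \tfrac14$ yields a tail of size $\exp(-c/\mathsf{h}^2)$, a constant that does \emph{not} shrink with $n$; you cannot make it $\leq \delta$. The paper instead truncates at a \emph{fixed} level $\tau$ (with $\tau\asymp \mathsf{h}$), so that the bias $\mathsf{h}^2/\tau^2$ is a small constant while the per-block boundedness is $\tau^2|a_{\max}|$ and the variance is $\tau^2 n|a_{\max}|$; this is what produces a tail $\exp(-cn/(\tau^2|a_{\max}|))$ and hence the burn-in $n/|a_{\max}|\gtrsim \dx+\mathsf{h}^2\log(1/\delta)$.

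Second, the uniform-in-$v$ step. Your plan is an $\varepsilon$-net plus Bernstein, ``controlling net distortion by boundedness.'' Boundedness of $\phi_i(v)$ by $T$ gives no Lipschitz control in $v$; the relevant estimate is $|\phi_\tau(\langle v,X_i\rangle)-\phi_\tau(\langle v',X_i\rangle)|\leq 2\tau|\langle v-v',X_i\rangle|$, which forces you to control $n^{-1}\sum_i\|\Sigma_X^{-1/2}X_i\|$ on the same event---precisely the kind of upper-tail quantity the truncation was meant to avoid, and one for which only fourth moments are assumed. The paper sidesteps this entirely: after truncation it applies symmetrization over the independent blocks, then the (vector) Ledoux--Talagrand contraction principle to pass from $\sum_j\eta_j\sum_{i\in a_j}\phi_\tau(\langle v,X_i\rangle)$ to the linear process $\sum_i\eta_i'\langle v,X_i\rangle$, whose supremum over $\partial\Sigma_X$ is simply $\|\sum_i\eta_i'\Sigma_X^{-1/2}X_i\|_2\leq\sqrt{n\dx}$ in expectation. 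The deviation around this expectation is then handled by Talagrand's inequality (Klein--Rio), using the bound $|\phi_\tau|\leq\tau^2$ and the block-variance estimate $\tau^2 n|a_{\max}|$. This contraction-plus-Talagrand route is the missing ingredient in your step (4); once you use it (with the constant truncation level), the rest of your outline goes through.
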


It is by now a well-established fact that lower uniform laws of the form \eqref{eq:loweruniformlaw} hold under mild assumptions for various function classes (linear functions on $\R^{\dx}$ in this case). Since these assumptions are quite mild and only affect burn-in conditions, deflating the sample-size via blocking does not deflate the final convergence rate. The particular approach we have chosen here to establish \Cref{thm:loweriso} is to combine blocking with the approach found in \cite[Theorem 14.12]{wainwright2019high}. We remark that similar statements hold if one instead blocks the arguments of say \cite{oliveira2016lower} or \cite{koltchinskii2015bounding}.
\section{Summary}
\label{sec:summary}
The leading order term of our main result, \Cref{thm:themainthm}, does not directly depend on any mixing-time type quantities. It mimics the asymptotic rate and scales solely in terms of  the second order statistics of the process at hand. To arrive at this result, we rely on two facts:
\begin{itemize}
    \item The lower tail of the empirical covariance matrix \eqref{eq:empcovdef} is well-behaved under mild assumptions. In an excess risk bound, the contribution of the lower uniform law to the overall error is not of leading order. Hence, incurring a sample size deflation for this purpose is not critical. 
    \item By combining blocking with a version of Bernstein's inequality, we are able to push the effect of blocking to only affect the large deviations regime. In the moderate and small deviations regimes, control of the leading order of the random walk in \eqref{eq:noiseinteractionrwcentered} is not directly impacted by slow mixing. 
\end{itemize}
\section*{Acknowledgements}
Ingvar Ziemann is supported by a Swedish Research Council international postdoc grant. Nikolai Matni is supported in part by NSF award CPS-2038873, NSF CAREER award ECCS-2045834,
and a Google Research Scholar award. The authors thank Bruce Lee for comments on an earlier draft of the manuscript.
\bibliographystyle{plainnat}
\bibliography{main.bib}

\newpage
\appendix

\section{Proofs}\label{sec:proofs}

\subsection{Proof of \Cref{thm:thenoiseterm}}

Let $\tilde Z_{i:n}$ be the block-decoupled version of $Z_{1:n}$. Let $ \sgn \in \{O, E \} $. For $i \in O$ we define  $U_i= \frac{1}{|O|}\sum_{j\in a_i} \tilde Z_j $ and similarly for $i \in E$ we define  $U_i= \frac{1}{|E|}\sum_{j\in a_i} \tilde Z_j $.

In light of \Cref{cor:decoupling} to control $\mathbf{P}\left( \left\| \frac{1}{n}\sum_{i=1}^n Z_i \right\| > u \right)$ it suffices to control 
\begin{equation*}
     \mathbf{P} \left(  \left\|\frac{1}{|\sgn|}\sum_{i\in \sgn} 
 \tilde Z_i\right\| \leq \E\left\| \frac{1}{|\sgn|}\sum_{i\in \sgn} 
 \tilde Z_i\right\| +u_{\sgn} \right)
\end{equation*}
for both $\sgn \in \{O,E\}$ and where $u_{\sgn} = u- \E\left\| \frac{1}{|\sgn|}\sum_{i\in \sgn} 
 \tilde Z_i\right\| $. Introduce now the change of variables $(1+9\e) t_{\sgn}=u_{\sgn}-\eta\left(\E\left\| \frac{1}{|\sgn|}\sum_{i\in \sgn} 
 \tilde Z_i\right\| \right) $. 

We invoke \Cref{thm:fnineq} twice; we do so once for the even blocks and once for the odd blocks. This yields for $ \sgn \in \{O, E \} $ that for any $\e \in (0,\infty),\eta \in (0,1]$ and $t_{\sgn} \geq 0$ we have that:
\begin{multline}\label{eq:fnusedforblocks}
    \mathbf{P} \left(  \left\|\frac{1}{|\sgn|}\sum_{i\in \sgn} 
 \tilde Z_i\right\| \leq (1+\eta)\E\left\| \frac{1}{|\sgn|}\sum_{i\in \sgn} 
 \tilde Z_i\right\| +(1+9\e)t_{\sgn}  \right)
 \\
 \leq
 \exp\left(- \frac{ | \sgn | t_{\sgn} ^2}{(2+\eta) \Lambda_{\sgn}} \right)+ C_{\e,\eta,s} \sum_{a_i \subset \sgn} \frac{\E \left\| \sum_{j\in a_i} Z_j \right\|^s}{|\sgn|^s t_{\sgn} ^s}
\end{multline}
with $\Lambda_{\sgn}$ as announced and where we used that $\sum_{j\in a_i} Z_j$ is equal to $\sum_{j\in a_i} \tilde Z_j$ in distribution for every $a_i \subset [n]$.

Consequently, with $ u = (1+9\e) t_{\sgn}+(1+\eta)\E\left\| \frac{1}{|\sgn|}\sum_{i\in \sgn} 
 \tilde Z_i\right\| $ we find that  \eqref{eq:fnusedforblocks} and \Cref{cor:decoupling} together yield that:
 \begin{multline}\label{eq:fndecoupcombined}
     \mathbf{P} \left( \left\|\frac{1}{n}\sum_{i=1}^n Z_i\right\| \geq u \right)\leq \sum_{i=2}^{2m-1} \beta_Z(a_i)\\
     + \sum_{\sgn \in \{O,E \}} \exp\left(- \frac{ | \sgn | t_{\sgn} ^2}{(2+\eta) \Lambda_{\sgn}} \right)+ C_{\e,\eta,s} \sum_{a_i \subset \sgn} \frac{\E \left\| \sum_{j\in a_i} Z_j \right\|^s}{|\sgn|^s t_{\sgn} ^s}
 \end{multline}
where summation over $a_i \subset \sgn$ is restricted to $a_{1:2m}$. If for some $c>0$ we choose 
\begin{equation*}
    t_{\sgn}  \geq \max_{\sgn\in \{O,E \}} \sqrt{\Lambda_{\sgn}} \times \sqrt{\frac{c\vee (2+\eta) \log (1/\delta)}{|\sgn|}}
\end{equation*}
and let    $\sqrt{r} \geq \max_{\sgn\in \{O,E \}} \frac{\E\left\| \frac{1}{\sqrt{|\sgn|}}\sum_{i\in \sgn} 
\tilde Z_i\right\|}{\sqrt{\Lambda_{\sgn} }}$ then \eqref{eq:fndecoupcombined} reads for either $\sgn \in \{O,E\}$:
 \begin{multline*}
      \mathbf{P} \left( \left\|\frac{1}{n}\sum_{i=1}^n Z_i\right\| \geq \max_{\sgn\in \{O,E \}} \sqrt{\frac{\Lambda_{\sgn}}{|\sgn|}}\left((1+\eta)\sqrt{r}  
+
(1+9\e) \times \sqrt{c\vee (2+\eta) \log (1/\delta)}\right) \right)\\
\leq \sum_{i=1}^{2m} \beta_Z(a_i)
 +
 2\delta+ \frac{C_{\e,\eta,s}}{c^{s/2}{|O|}^{s/2}} \sum_{a_i \subset O} \frac{\E \left\| \sum_{j\in a_i} Z_j \right\|^s}{\Lambda_{O}^{s/2}}+ \frac{C_{\e,\eta,s}}{c^{s/2}{|E|}^{s/2}} \sum_{a_i \subset E} \frac{\E \left\| \sum_{j\in a_i} Z_j \right\|^s}{\Lambda_{E}^{s/2}}.
\end{multline*}
The result follows if we choose $c = r\eta^2 /(1+9\e)^2 $. \hfill $\blacksquare$

\subsection{Proof of \Cref{thm:loweriso}}

We will argue by truncation. This rests on the observation that for any sequence $F_{1:n}$ of measurable sets: 
\begin{equation}\label{eq:truncatedlowertailthingy}
     \sum_{i \in a} X_i  X_i^\T \succeq   \sum_{i \in a}  X_i   X_i ^\T\mathbf{1}_{F_i}.
\end{equation}
where as before we let $a\subset [n]$. The idea is that once we truncate we do not have to worry about the upper tail of the empirical covariance matrix. We begin by showing below that a well-chosen level of truncation still preserves most of the mass of the lower tail.

\subsubsection{Truncation step}

We will want to choose the events $F_i$ to be constant across each block. We require the following simple observation. It states that the moment condition $ \E \langle v, X_i\rangle^4  \leq \mathsf{h}^2 \langle v, \E [X_i X_i^\T] v\rangle $ extends naturally to the situation where each block is our basic unit of randomness.

\begin{lemma}\label{lem:hypconextends}
    Under the hypotheses of \Cref{thm:loweriso} for every subset $a\subset [n]$ we have that:
    \begin{equation*}
        \E \left(  \frac{1}{|a|}\sum_{i \in a} \langle v, X_i \rangle^2\right)^2 \leq \mathsf{h}^2     \left(    \frac{1}{|a|}\sum_{i \in a}\E \langle v,  X_i \rangle^2\right) .
    \end{equation*}
\end{lemma}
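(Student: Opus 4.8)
**The plan is to prove Lemma \ref{lem:hypconextends} by expanding the square and applying the pointwise moment hypothesis to each cross term after a Cauchy–Schwarz step.**

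Fix $v \in \partial \Sigma_X$. Write $Y_i \triangleq \langle v, X_i\rangle^2 \geq 0$, so the left-hand side is $\E\big(\frac{1}{|a|}\sum_{i\in a} Y_i\big)^2 = \frac{1}{|a|^2}\sum_{i,j\in a} \E[Y_i Y_j]$. First I would bound each mixed term $\E[Y_i Y_j]$ by Cauchy–Schwarz: $\E[Y_i Y_j] \leq \sqrt{\E Y_i^2}\,\sqrt{\E Y_j^2} = \sqrt{\E\langle v, X_i\rangle^4}\,\sqrt{\E\langle v, X_j\rangle^4}$. The hypothesis of \Cref{thm:loweriso} gives $\E\langle v, X_i\rangle^4 \leq \mathsf{h}^2 \langle v, \E[X_iX_i^\T]v\rangle = \mathsf{h}^2\, \E\langle v, X_i\rangle^2$, hence $\sqrt{\E\langle v, X_i\rangle^4} \leq \mathsf{h}\sqrt{\E\langle v,X_i\rangle^2}$. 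Therefore
\begin{equation*}
    \frac{1}{|a|^2}\sum_{i,j\in a}\E[Y_iY_j] \leq \frac{\mathsf{h}^2}{|a|^2}\sum_{i,j\in a}\sqrt{\E\langle v,X_i\rangle^2}\sqrt{\E\langle v,X_j\rangle^2} = \mathsf{h}^2\left(\frac{1}{|a|}\sum_{i\in a}\sqrt{\E\langle v,X_i\rangle^2}\right)^2.
\end{equation*}
Finally, by Jensen's inequality (concavity of $\sqrt{\cdot}$ applied to the uniform average over $i \in a$), $\big(\frac{1}{|a|}\sum_{i\in a}\sqrt{\E\langle v,X_i\rangle^2}\big)^2 \leq \frac{1}{|a|}\sum_{i\in a}\E\langle v,X_i\rangle^2$, which yields exactly the claimed bound. (An even simpler alternative, avoiding the last Jensen step, would be to bound $\E[Y_iY_j]$ directly: since $Y_i, Y_j \geq 0$, one might instead note $2\sqrt{\E Y_i^2}\sqrt{\E Y_j^2} \leq \frac{\E Y_i^2 \cdot \E Y_j^{1/2}\cdots}{}$ — but the Jensen route above is cleanest and I would present that.)

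I do not expect any genuine obstacle here; this is a routine second-moment manipulation. The only point requiring a moment's care is whether the statement is claimed for all $v \in \R^{\dx}$ or only $v \in \partial \Sigma_X$ — the hypothesis of \Cref{thm:loweriso} is stated for $v \in \partial\Sigma_X$, and the inequality is $2$-homogeneous in $v$ in the sense that both sides scale as $\|v\|^4_{\Sigma_X}$ (after the appropriate normalization), so it extends to all $v$ by homogeneity; I would remark on this briefly. The role of the lemma downstream is to let one treat each block $\sum_{j\in a_i} X_j$ as a single "super-sample" satisfying the same $(\mathsf{h})$-type fourth-moment condition, so that the truncation argument of \cite[Theorem 14.12]{wainwright2019high} can be run at the block level.
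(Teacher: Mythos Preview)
Your proof is correct and is essentially identical to the paper's: both apply Cauchy--Schwarz to get $\E\big(\sum_{i\in a}\langle v,X_i\rangle^2\big)^2 \le \big(\sum_{i\in a}\sqrt{\E\langle v,X_i\rangle^4}\big)^2$, invoke the fourth-moment hypothesis, and then use Cauchy--Schwarz/Jensen to pass from $\big(\sum_i\sqrt{c_i}\big)^2$ to $|a|\sum_i c_i$. Your side remark on homogeneity is slightly off (the two sides of the lemma are degrees $4$ and $2$ in $v$, so the inequality does not extend to all $v$ by scaling), but this is irrelevant since the lemma is only used for $v\in\partial\Sigma_X$.
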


\begin{proof}
By direct calcuation:
\begin{equation}\label{eq:hypconextends}
    \begin{aligned}
         \E \left(  \sum_{i \in a} \langle v,  X_i \rangle^2\right)^2& \leq  \left(  \sum_{i \in a} \sqrt{\E \langle v, X_i \rangle^4}\right)^2&& (\textnormal{Cauchy-Schwarz})\\
         &\leq
         \mathsf{h}^2\left(  \sum_{i \in a} \sqrt{\E \langle v,  X_i \rangle^2}\right)^2&& \left(\E \langle v, X_i\rangle^4   \leq \mathsf{h}^2 \langle v, \E [X_i  X_i^\T] v\rangle \right)  \\
         &\leq%
         \mathsf{h}^2 |a|   \sum_{i \in a} \E \langle v,  X_i \rangle^2.&& (\textnormal{Cauchy-Schwarz})
    \end{aligned}
\end{equation}
To finish the proof, divide both sides of \eqref{eq:hypconextends} by $|a|^2$.
\end{proof}

As announced above, let now the $F_i$ be constant on each block $a\subset [n]$. In what follows, we abuse notation in the obvious way and write $F_a =F_i$ for any $i \in a$. The next lemma shows that suitable truncation preserves most of the mass of the lower tail.

\begin{lemma}\label{lem:trunclem}
    Impose the hypotheses of \Cref{thm:loweriso}, fix $\tau >0$ and let $F_a \triangleq \left\{  \frac{1}{|a|}  \sum_{i \in a} \langle v, X_i \rangle^2 \leq \tau^2  \right\}$. For every subset $a\subset [n]$ we have that:
    \begin{equation*}
       \left(1-\frac{\mathsf{h}^2}{\tau^2}  \right) \E \left[  \sum_{i \in a} \langle v, X_i \rangle^2 \right] \leq    \sum_{i \in a} \E \langle v,  X_i \rangle^2 \mathbf{1}_{F_a}   .
    \end{equation*}
\end{lemma}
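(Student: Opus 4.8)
\textbf{Proof plan for Lemma~\ref{lem:trunclem}.} The plan is to split the block average $\frac{1}{|a|}\sum_{i\in a}\langle v,X_i\rangle^2$ over the event $F_a$ and its complement, and bound the contribution of $F_a^c$ using the second-moment control afforded by \Cref{lem:hypconextends}. Concretely, first I would write
\begin{equation*}
    \sum_{i\in a}\E\langle v,X_i\rangle^2 = \sum_{i\in a}\E\langle v,X_i\rangle^2\mathbf{1}_{F_a} + \sum_{i\in a}\E\langle v,X_i\rangle^2\mathbf{1}_{F_a^c},
\end{equation*}
so it suffices to show $\sum_{i\in a}\E\langle v,X_i\rangle^2\mathbf{1}_{F_a^c}\leq \frac{\mathsf{h}^2}{\tau^2}\sum_{i\in a}\E\langle v,X_i\rangle^2$, since subtracting this from the displayed identity yields the claim (after noting $\E[\sum_{i\in a}\langle v,X_i\rangle^2]=\sum_{i\in a}\E\langle v,X_i\rangle^2$, so the prefactor $1-\mathsf{h}^2/\tau^2$ multiplies exactly $\sum_{i\in a}\E\langle v,X_i\rangle^2$).

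The key step is the Chebyshev/Paley-Zygmund-style estimate on the complement event. On $F_a^c$ we have $\frac{1}{|a|}\sum_{i\in a}\langle v,X_i\rangle^2 > \tau^2$, equivalently $\tau^{-2}\cdot\frac{1}{|a|}\sum_{i\in a}\langle v,X_i\rangle^2 > 1$ on $F_a^c$. Hence
\begin{equation*}
    \frac{1}{|a|}\sum_{i\in a}\E\langle v,X_i\rangle^2\mathbf{1}_{F_a^c} \leq \frac{1}{\tau^2}\,\E\left[\left(\frac{1}{|a|}\sum_{i\in a}\langle v,X_i\rangle^2\right)\mathbf{1}_{F_a^c}\cdot\frac{1}{|a|}\sum_{j\in a}\langle v,X_j\rangle^2\right] \leq \frac{1}{\tau^2}\,\E\left(\frac{1}{|a|}\sum_{i\in a}\langle v,X_i\rangle^2\right)^2,
\end{equation*}
where the first inequality inserts the indicator-dominated factor $\tau^{-2}\cdot\frac{1}{|a|}\sum_j\langle v,X_j\rangle^2\geq 1$ on $F_a^c$, and the second drops $\mathbf{1}_{F_a^c}\leq 1$ and the positivity of the integrand. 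Now applying \Cref{lem:hypconextends} bounds the right-hand side by $\frac{\mathsf{h}^2}{\tau^2}\cdot\frac{1}{|a|}\sum_{i\in a}\E\langle v,X_i\rangle^2$, which after multiplying through by $|a|$ is exactly the required bound on the complement term.

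I do not expect a genuine obstacle here; the only point requiring a little care is making sure the indicator manipulation is valid — that is, that on $F_a^c$ we may legitimately multiply the (nonnegative) summand by the factor $\tau^{-2}\cdot\frac{1}{|a|}\sum_j\langle v,X_j\rangle^2$, which exceeds $1$ there — and then extend the expectation back to the whole space by nonnegativity. A secondary bookkeeping point is to keep the normalization by $|a|$ consistent: \Cref{lem:hypconextends} is stated for the averaged quantities, so it is cleanest to carry the $\frac{1}{|a|}$ factors throughout and only clear denominators at the very end to match the statement's unnormalized sums.
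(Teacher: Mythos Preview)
Your proposal is correct and leads to the same bound as the paper's proof, but your execution is slightly more direct. Writing $Y\triangleq \frac{1}{|a|}\sum_{i\in a}\langle v,X_i\rangle^2$, the paper bounds $\E[Y\mathbf{1}_{F_a^c}]$ via Cauchy--Schwarz ($\leq \sqrt{\E Y^2}\sqrt{\Pr(F_a^c)}$), then Markov on $\Pr(F_a^c)$, and applies \Cref{lem:hypconextends} \emph{twice} (once to each $\sqrt{\E Y^2}$ factor). You instead use the pointwise inequality $\mathbf{1}_{F_a^c}\leq Y/\tau^2$ directly, arriving at $\E[Y\mathbf{1}_{F_a^c}]\leq \tau^{-2}\E Y^2$ in one step, and then invoke \Cref{lem:hypconextends} only once. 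Both routes land on the identical estimate $\E[Y\mathbf{1}_{F_a^c}]\leq (\mathsf{h}^2/\tau^2)\E Y$; yours simply bypasses the Cauchy--Schwarz/Markov detour.
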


\begin{proof}
As announced, fix $\tau >0$ and let $F_a \triangleq \left\{  \frac{1}{|a|}  \sum_{i \in a} \langle v,  X_i \rangle^2 \leq \tau  \right\}$.   We have that:
    \begin{equation}\label{eq:truncationworks}
        \begin{aligned}
          &\frac{1}{|a|}\E \left[  \sum_{i \in a} \langle v,  X_i \rangle^2 - \sum_{i \in a} \langle v,  X_i \rangle^2 \mathbf{1}_{F_a} \right]\\
          &=
          \E \left[ \frac{1}{|a|}\sum_{i \in a} \langle v,  X_i \rangle^2 \mathbf{1}_{F_a^c} \right]\\
          &\leq
          \sqrt{ \E \left[ \frac{1}{|a|}\sum_{i \in a} \langle v,  X_i \rangle^2  \right]^2 \Pr(F_a^c)} && (\textnormal{Cauchy-Schwarz})\\
          &\leq 
          \mathsf{h} \sqrt{\E \left[ \frac{1}{|a|} \sum_{i \in a} \langle v,  X_i \rangle^2  \right] }\sqrt{\Pr(F_a^c)} && (\textnormal{\Cref{lem:hypconextends}})\\
          &\leq
          \mathsf{h} \sqrt{\E \left[ \frac{1}{|a|} \sum_{i \in a} \langle v, X_i \rangle^2  \right] }  \frac{\sqrt{  \E \left[ \frac{1}{|a|}\sum_{i \in a} \langle v, X_i \rangle^2  \right]^2}}{\tau^2}  &&(\textnormal{Markov's Inequality})\\
          &\leq
          \frac{\mathsf{h}^2}{\tau^2} \E \left[ \frac{1}{|a|} \sum_{i \in a} \langle v, X_i \rangle^2  \right].    &&(\textnormal{\Cref{lem:hypconextends}})
        \end{aligned}
    \end{equation}
    To finish the proof, multiply both sides of \eqref{eq:truncationworks} above by $|a|$ and re-arrange.
\end{proof}

We now proceed with the truncation argument. Define the function 
\begin{equation*}
    \phi_\tau(u) \triangleq \begin{cases}
        u^2 & \textnormal{if } |u|\leq \tau, \\
        \tau^2 & \textnormal{if } |u|>\tau.
    \end{cases}
\end{equation*}
The truncation is combined with an approximation. Namely, notice that we have
\begin{equation}\label{eq:algebraicdecomp}
\begin{aligned}
   \frac{1}{n} \sum_{i=1}^n  \langle v,  X_i \rangle^2 &\geq \frac{1}{n} \sum_{i=1}^n \phi_\tau(\langle v,  X_i \rangle)\\
     &= \frac{1}{n}\E \sum_{i=1}^n  \phi_\tau(\langle v,  X_i \rangle) - \frac{1}{n}\left[\E \sum_{i=1}^n \phi_\tau(\langle v, X_i \rangle)- \sum_{i=1}^n \phi_\tau(\langle v,  X_i \rangle)\right]\\
     &\geq \E \sum_{i=1}^n  \phi_\tau(\langle v,  X_i \rangle) -\sup_{v\in \partial \Sigma_X}\frac{1}{n} \left[\E \sum_{i \in O} \phi_\tau(\langle v, X_i \rangle)- \sum_{i \in O} \phi_\tau(\langle v,  X_i \rangle)\right]\\
     &-\sup_{v\in \partial \Sigma_X}\frac{1}{n} \left[\E \sum_{i \in E} \phi_\tau(\langle v,  X_i \rangle)- \sum_{i \in E} \phi_\tau(\langle v,  X_i \rangle)\right]
\end{aligned}
\end{equation}
where the last step uses the assumption that $v \in \partial \Sigma_X$. 

By invoking \Cref{lem:trunclem} and noticing that $\phi_\tau$  can be lower-bounded via the indicators of the $F_a$ we have that:
\begin{equation}\label{eq:truncusedinapprox}
    \frac{1}{n} \sum_{i=1}^n\E \phi_\tau\left( \langle v,  X_i \rangle\right)\geq \frac{1}{n} \sum_{i=1}^n\E \langle v, X_i \rangle^2 \mathbf{1}_{F_a}   \geq \left(1-\frac{\mathsf{h}^2}{\tau^2}  \right) \left[  \frac{1}{n}\sum_{i=1}^n \E \langle v,  X_i \rangle^2 \right]=\left(1-\frac{\mathsf{h}^2}{\tau^2}  \right)   
\end{equation}
since $v\in \partial \Sigma_X$. Consequently by combining \eqref{eq:algebraicdecomp} and \eqref{eq:truncusedinapprox} we obtain the deterministic decomposition:
\begin{multline}\label{eq:algebraicdecomp2}
   \frac{1}{n} \sum_{i=1}^n  \langle v,  X_i \rangle^2 
    \geq \left(1-\frac{\mathsf{h}^2}{\tau^2}  \right)  -\sup_{v\in \partial \Sigma_X}\frac{1}{n} \left[\E \sum_{i \in O} \phi_\tau(\langle v, X_i \rangle)- \sum_{i \in O} \phi_\tau(\langle v,  X_i \rangle)\right]\\
     -\sup_{v\in \partial \Sigma_X}\frac{1}{n} \left[\E \sum_{i \in E} \phi_\tau(\langle v,  X_i \rangle)- \sum_{i \in E} \phi_\tau(\langle v,  X_i \rangle)\right]
\end{multline}
for every $v \in \partial \Sigma_X$.

The decomposition \eqref{eq:algebraicdecomp2} sets the stage for the last step of the proof: we will apply the block decoupling result of \Cref{prop:decoupling} to replace the empirical processes appearing on the right of \eqref{eq:algebraicdecomp2} and pay an additional failure probability of $  \sum_{i=2}^{2m-1} \beta_X(a_i)$.

Consequently, it only remains to control the supremum of the following truncated and decoupled empirical process ($\sgn \in \{O,E\}$):
\begin{equation}\label{eq:truncemp}
    \sup_{v\in \partial \Sigma_X}  \frac{1}{n}  \left[\E \sum_{i \in \sgn} \phi_\tau(\langle v, \tilde X_i \rangle)- \sum_{i \in \sgn} \phi_\tau(\langle v, \tilde X_i \rangle)\right].
\end{equation}

\subsubsection{Approximation step: bounding \eqref{eq:truncemp}}
We now intend to invoke Talagrand's inequality for independent but not necessarily equally distributed variables \citep[see][Theorem 1.1]{klein2005concentration}. To do so, we first need to control the expectation and then the weak variance of the process \eqref{eq:truncemp}. For future reference, we also point out that by construction:
\begin{equation}\label{eq:boundedfortalagrandest}
   \left| \sum_{i \in a}\E\phi_\tau(\langle v, \tilde X_i \rangle)-  \phi_\tau(\langle v, \tilde X_i \rangle)\right| \leq 2\tau^2 |a|.
\end{equation}

We now turn to controlling the expectation of \eqref{eq:truncemp}. Using that the odd (resp. even) blocks of the decoupled process are independent, a straightforward symmetrization argument yields \citep[see e.g.][Proposition 4.11]{wainwright2019high}:
\begin{equation}\label{eq:expectationfortalagrandest1}
   \E\sup_{v\in \partial \Sigma_X} \frac{1}{n}   \left[\E \sum_{i \in \sgn} \phi_\tau(\langle v, \tilde X_i \rangle)- \sum_{i \in \sgn} \phi_\tau(\langle v, \tilde X_i \rangle)\right] 
   \leq
   \frac{2}{n}\E  \left[ \sup_{v\in \partial \Sigma_X}    \sum_{j\in[2m],a_j\subset \sgn} \eta_j\sum_{i \in a_j} \phi_\tau\left(\langle v, \tilde X_i \rangle\right)\right]
\end{equation} 
for a sequence $\eta_j, j\in [2m]$ of Rademacher random variables. Since the map $(\R^d, \|\cdot \|_2 ) \ni x \mapsto \sum_{j=1}^d \phi_\tau(x_j)\in (\R, |\cdot|)$ is $2\sqrt{d}\tau$-Lipschitz, the vector version of the Ledoux-Talagrand contraction principle now yields \citep[see][Theorem 2]{maurer2016vector}:
\begin{equation}\label{eq:expectationfortalagrandest2}
       2\E  \left[ \sup_{v\in \partial \Sigma_X}    \sum_{j\in[2m],a_j\subset \sgn} \eta_j\sum_{i \in a_j} \phi_\tau\left(\langle v, \tilde X_i \rangle\right)\right]
      \leq
      4\tau \sqrt{2 \max_{ j \in [2m]}|a_j| }\E  \left[ \sup_{v\in \partial \Sigma_X}\sum_{i\in  \sgn} \eta'_i \langle v, \tilde X_i \rangle\right]
\end{equation}
for another sequence $\eta'_i, i\in [n]$ of Rademacher random variables.

Obviously, 
\begin{equation}\label{eq:expectationfortalagrandest3}
    \E  \left[ \sup_{v\in \partial \Sigma_X}\sum_{i\in  \sgn} \eta_i \langle v, \tilde X_i \rangle\right]= \E  \left\| \sum_{i\in  \sgn} \eta_i   \Sigma_X^{-1/2} \tilde X_i \right\|_{2}\leq \sqrt{ \E  \left\| \sum_{i\in  \sgn}  \Sigma_X^{-1/2} \tilde X_i \right\|_{2}^2} \leq \sqrt{n\dx}.
\end{equation}
Combining \eqref{eq:expectationfortalagrandest1},\eqref{eq:expectationfortalagrandest2} and \eqref{eq:expectationfortalagrandest3} we find that:
\begin{equation}\label{eq:expectationfortalagrandestfinal}
    \E\sup_{v\in \partial \Sigma_X} \frac{1}{n}   \left[\E \sum_{i \in \sgn} \phi_\tau(\langle v, \tilde X_i \rangle)- \sum_{i \in \sgn} \phi_\tau(\langle v, \tilde X_i \rangle)\right] \leq  4\tau \sqrt{2 n \dx \max_{ j \in [2m]}|a_j|  }
\end{equation}

As announced, we also need an upper bound on the variance of the truncated empirical process \eqref{eq:truncemp}:
\begin{equation}\label{eq:variancefortalagrandest}
\begin{aligned}
     &\sup_{v\in \partial \Sigma_X} \mathbf{Var} \left[\E \sum_{i \in \sgn} \phi_\tau(\langle v, \tilde X_i \rangle)- \sum_{i \in \sgn} \phi_\tau(\langle v, \tilde X_i \rangle)\right] \\
     &= \sup_{v\in \partial \Sigma_X} \sum_{j\in[2m],a_j\subset \sgn} \mathbf{Var} \left[\E \sum_{i \in a_j} \phi_\tau(\langle v, \tilde X_i \rangle)- \sum_{i \in a} \phi_\tau(\langle v, \tilde X_i \rangle)\right] &&(\textnormal{Blockwise Indep.})\\
     & \leq \sup_{v\in \partial \Sigma_X}  \sum_{j\in[2m],a_j\subset \sgn}\E \left[\sum_{i \in a_j} \phi_\tau(\langle v, \tilde X_i \rangle)\right]^2 &&(\textnormal{Jensen's Ineq.})\\
     &\leq\sup_{v\in \partial \Sigma_X}  \sum_{j\in[2m],a_j\subset \sgn} \tau^2 |a_j| \sum_{i\in a_j}\E \langle v, \tilde X_i\rangle^2  &&(|\phi_\tau(\cdot) |\leq \tau^2)\\
     &\leq \tau^2 \max_{a \subset \sgn} |a| \sup_{v\in \partial \Sigma_X}  \sum_{i\in \sgn}\E \langle v, \tilde X_i\rangle^2\\
     &\leq \tau^2 n \max_{ j \in [2m]}|a_j| .
\end{aligned}
\end{equation}

Combining Klein and Rio's version of Talagrand's inequality  with our estimates \eqref{eq:boundedfortalagrandest}, \eqref{eq:expectationfortalagrandestfinal} and \eqref{eq:variancefortalagrandest}, we find for any $u\in [0,\infty)$ that:
\begin{multline}\label{eq:talagrandinvoked}
\Pr \left(\sup_{v\in \partial \Sigma_X} \left[ \sum_{i \in \sgn} \left(\E\phi_\tau(\langle v, \tilde X_i \rangle)-  \phi_\tau(\langle v, \tilde X_i \rangle)\right)\right] \geq  4\tau \sqrt{2 n\dx \max_{ a \subset \sgn}|a| } + u\right)  \\
\leq   \exp\left(\frac{-u^2}{2\mathsf{v}+6\tau^2\max_{ j \in [2m]}|a_j| u} \right)
\end{multline}
with $\mathsf{v}=\tau^2n \max_{ j \in [2m]}|a_j| + 8 \sqrt{2n \dx \max_{a \subset \sgn } |a|} $.

\subsubsection{Finishing the proof of \Cref{thm:loweriso}}

Fix a positive constant $\alpha \in \R$ to be determined later. We now set $u = \alpha n$. Let us further fix a positive constant $\e\in \R$ and assume that $n \geq (1/\e^2)\dx   \max_{ j \in [2m]}|a_j|$. Under these additional hypotheses: 
\begin{equation}\label{eq:talagrandinvokedconstants1}
\begin{aligned}
     4\tau \sqrt{2 n\dx \max_{ j \in [2m]}|a_j| } \leq 4\tau n \sqrt{2}\e 
\end{aligned}
\end{equation}
and
\begin{equation}\label{eq:talagrandinvokedconstants2}
\begin{aligned}
    2\mathsf{v} + 6\tau^2\max_{ j \in [2m]}|a_j| u &= 2\tau^2n \max_{ j \in [2m]}|a_j| + 16 \sqrt{2n \dx \max_{ j \in [2m]}|a_j|} + 6\tau^2\max_{ j \in [2m]}|a_j| u \\
    &\leq 
     \tau^2 \max_{ j \in [2m]}|a_j| (2n+6\alpha n) +16 \tau n \sqrt{2}\e
    \end{aligned}
\end{equation}

We now decouple using \Cref{prop:decoupling} and instantiate our upper bound \eqref{eq:talagrandinvoked} to control  \eqref{eq:algebraicdecomp2}. This step combined with the estimates \eqref{eq:talagrandinvokedconstants1} and \eqref{eq:talagrandinvokedconstants2} yields:
\begin{multline}
    \Pr \left(\forall v \in \partial \Sigma_X \: : \: \frac{1}{n}\sum_{i=1}^n \langle v, X_i \rangle^2 \geq \left(1-\frac{\mathsf{h}^2}{\tau^2}\right)  -8\tau \sqrt{2}\e -2 \alpha\right)\\
    \geq
    1-2\exp\left(- \frac{\alpha^2 n}{\tau^2\max_{ j \in [2m]}|a_j| (2+6\alpha ) +16 \tau  \sqrt{2}\e } \right)-\sum_{i=2}^{2m-1} \beta_X(a_i).
\end{multline}
By choosing $\alpha$ and $\e$ sufficiently small and $\tau $ sufficiently large, we find that there exists a universal positive constant $C\in \R$ such that if
\begin{equation*}
    n \geq C \max_{ j \in [2m]}|a_j| (\dx + \mathsf{h}^2 \log (1/\delta)) \quad \textnormal{and} \quad \sum_{i=2}^{2m-1} \beta_X(a_i) \leq \frac{\delta}{2}
\end{equation*}
then 
\begin{equation*}
     \Pr \left(\forall v \in \partial \Sigma_X \: : \: \frac{1}{n}\sum_{i=1}^n \langle v, X_i \rangle^2 \geq \frac{1}{2}\right)\geq 1-\delta.
\end{equation*}
The result follows by rescaling to arbitrary $v\in \R^{\dx} $.
\hfill $\blacksquare$

\subsection{Finishing the of \Cref{thm:themainthm}}
The proof of \Cref{thm:themainthm} now easily follows from \Cref{thm:loweriso} and \Cref{thm:thenoiseterm}. We translate the necessary conditions below.

First, we note that in light of \Cref{thm:loweriso} if \eqref{eq:lowerisoburnin} holds then we may estimate $  \tilde  \Sigma^{-1}_n \preceq cI_{\dx} $ for some universal positive constant $c>0$. We remark that \eqref{eq:lowerisoburnin} holds by \eqref{eq:thm:burnin1} and \eqref{eq:thm:burnin3}. Consequently, it suffices to control the random walk \eqref{eq:noiseinteractionrw}.

Second, we turn to the control of the random walk \eqref{eq:noiseinteractionrw}, which is provided by \Cref{thm:thenoiseterm}, by instantiating it to the case of the Euclidean space $(\R^{\dy\times \dx},\|\cdot\|_F)$. We identify this space by linear isometry ($\VEC$) with $(\R^{\dy\dx},\|\cdot\|_2)$ and hence in the notation of \Cref{thm:thenoiseterm} we have  that the weak variances are:
\begin{equation*}
\begin{aligned}
    \Lambda_{O} &=\frac{1}{|O|}\left\|\sum_{i\in [2m], i\in 2\N-1 }\E \left[ \VEC\left(\sum_{j \in a_i} Z_j \right) \VEC \left(\sum_{j \in a_i} Z_j\right)^\T\right] \right\|_{\mathsf{op}};\\
     \Lambda_{E} &=\frac{1}{|E|}\left\|\sum_{i\in [2m], i\in 2\N }\E \left[ \VEC\left(\sum_{j \in a_i} Z_j \right) \VEC \left(\sum_{j \in a_i} Z_j\right)^\T\right] \right\|_{\mathsf{op}}.
\end{aligned}
\end{equation*}
Using monotonicity of the operator norm over symmetric positive semidefite matrices and the condition \eqref{eq:thm:burnin2} we also have for either $\sgn\in\{O,E\}$:
\begin{equation}
    c'\sigma^2 \leq\Lambda_{\sgn} \leq \sigma^2
\end{equation}
for some universal positive constant $c'$. Condition \eqref{eq:thm:burnin2} also implies that the constant $r$ in \Cref{thm:thenoiseterm} can be chosen as $c''\mathsf{edim}(\Sigma)$ for some universal positive constant $c''$. The final step is to note that the last two terms on the right hand side of \eqref{eq:noisetermeqbound} are less than $c'''\delta$ for some third universal positive constant $c'''$ if the respective second parts of \eqref{eq:thm:burnin1} and \eqref{eq:thm:burnin2} hold.\hfill $\blacksquare$

\end{document}